%%%%%%%%%%%%%%%%%%%%%%%%%%%%%%%%%%%%%%%%%%%%%%%%%%%%%%%%%%%%%%%%%%%%%%%%

%%% LaTeX Template for AAMAS-2025 (based on sample-sigconf.tex)
%%% Prepared by the AAMAS-2025 Program Chairs based on the version from AAMAS-2025. 

%%%%%%%%%%%%%%%%%%%%%%%%%%%%%%%%%%%%%%%%%%%%%%%%%%%%%%%%%%%%%%%%%%%%%%%%

%%% Start your document with the \documentclass command.

%%% == IMPORTANT ==
%%% Use the first variant below for the final paper (including auithor information).
%%% Use the second variant below to anonymize your submission (no authoir information shown).
%%% For further information on anonymity and double-blind reviewing, 
%%% please consult the call for paper information
%%% https://aamas2025.org/index.php/conference/calls/submission-instructions-main-technical-track/

%%%% For anonymized submission, use this
%\documentclass[sigconf,anonymous]{aamas} 

%%%% For camera-ready, use this
\documentclass[sigconf]{aamas}

%%% Load required packages here (note that many are included already).

\usepackage{balance} % for balancing columns on the final page
\usepackage{latexsym}
\usepackage{amsmath}
\usepackage{booktabs}
\usepackage{enumitem}
\usepackage{graphicx}

% \newenvironment{sketchproof}{\begin{proof}[Sketch of Proof]}{\end{proof}}
%%%%%%%%%%%%%%%%%%%%%%%%%%%%%%%%%%%%%%%%%%%%%%%%%%%%%%%%%%%%%%%%%%%%%%%%

%%% AAMAS-2025 copyright block (do not change!)

\setcopyright{ifaamas}
\acmConference[AAMAS '25]{Proc.\@ of the 24th International Conference
on Autonomous Agents and Multiagent Systems (AAMAS 2025)}{May 19 -- 23, 2025}
{Detroit, Michigan, USA}{Y.~Vorobeychik, S.~Das, A.~Nowe (eds.)}
\copyrightyear{2025}
\acmYear{2025}
\acmDOI{}
\acmPrice{}
\acmISBN{}

%%%%%%%%%%%%%%%%%%%%%%%%%%%%%%%%%%%%%%%%%%%%%%%%%%%%%%%%%%%%%%%%%%%%%%%%

%%% == IMPORTANT ==
%%% Use this command to specify your OpenReview submission number.
%%% In anonymous mode, it will be printed on the first page.

\acmSubmissionID{169}

%%% Use this command to specify the title of your paper.

\title[]{Causes and Strategies in Multiagent Systems}

%%% Provide names, affiliations, and email addresses for all authors.

\author{Sylvia S. Kerkhove}
\affiliation{
  \institution{Utrecht University}
  \city{Utrecht}
  \country{The Netherlands}}
\email{s.s.kerkhove@uu.nl}

\author{Natasha Alechina}
\affiliation{
  \institution{Open University}
  \city{Heerlen}
  \country{The Netherlands}  
  }
  \affiliation{
  \institution{Utrecht University}
  \city{Utrecht}
  \country{The Netherlands}}
\email{natasha.alechina@ou.nl}

\author{Mehdi Dastani}
\affiliation{
  \institution{Utrecht University}
  \city{Utrecht}
  \country{The Netherlands}}
\email{m.m.dastani@uu.nl}

%%% Use this environment to specify a short abstract for your paper.

\begin{abstract}
Causality plays an important role in daily processes, human reasoning, and artificial intelligence. 
There has however not been much research on causality in multi-agent strategic settings. 
In this work, we introduce a systematic way to build a multi-agent system model, represented as a concurrent game structure, for a given structural causal model. In the obtained so-called causal concurrent game structure, transitions correspond to interventions on agent variables of the given causal model. The Halpern and Pearl framework of causality is used to determine the effects of a certain value for an agent variable on other variables. The causal concurrent game structure allows us to analyse and reason about causal effects of agents’ strategic decisions. We formally investigate the relation between causal concurrent game structures and the original structural causal models. 
\end{abstract}

%%% The code below was generated by the tool at http://dl.acm.org/ccs.cfm.
%%% Please replace this example with code appropriate for your own paper.

%%% Use this command to specify a few keywords describing your work.
%%% Keywords should be separated by commas.

\keywords{Causality, Multi-Agent Systems, Strategic Behaviour}

%%%%%%%%%%%%%%%%%%%%%%%%%%%%%%%%%%%%%%%%%%%%%%%%%%%%%%%%%%%%%%%%%%%%%%%%

%%% Include any author-defined commands here.
         
\newcommand{\BibTeX}{\rm B\kern-.05em{\sc i\kern-.025em b}\kern-.08em\TeX}

\newcommand{\bs}{\backslash}
\newcommand{\vphi}{\varphi}

\newcommand{\set}[1]{\{ {#1} \}}
\newcommand{\csetting}{(\mathcal{M},\mathbf{u})}
\newcommand{\csettingint}[1]{(\mathcal{M}^{#1},\mathbf{u})}
\newcommand{\myvec}[1]{\mathbf{#1}}

%%%%%%%%%%%%%%%%%%%%%%%%%%%%%%%%%%%%%%%%%%%%%%%%%%%%%%%%%%%%%%%%%%%%%%%%

\begin{document}

%%% The following commands remove the headers in your paper. For final 
%%% papers, these will be inserted during the pagination process.

\pagestyle{fancy}
\fancyhead{}

%%% The next command prints the information defined in the preamble.

\maketitle 

%%%%%%%%%%%%%%%%%%%%%%%%%%%%%%%%%%%%%%%%%%%%%%%%%%%%%%%%%%%%%%%%%%%%%%%%

\section{Introduction}
\label{sec:introduction}

Causality plays an important role in Artificial Intelligence~\cite{pearl1995causal,halpern2016actual}. A specific type of causality, called `actual causality',  concerns causal relations between concrete events (e.g. throwing a specific rock shatters a specific bottle)~\cite{halpern2016actual}. There is still discussion on what the best definition of actual causality is (see \cite{halpern2016actual,Hall_2007,gladyshev2023dynamic} and \cite{Beckers_Vennekens_2018} for some of those definitions). 
However, most approaches like \cite{gladyshev2023dynamic} and \cite{Beckers_Vennekens_2018} use Pearl's \cite{pearl1995causal} structural model framework.
In this structural model framework, the world is modelled through  variables, which are divided in exogenous and endogenous variables. 
The former are variables whose values are determined by causes outside of the model and the latter are variables whose values are determined by the variables inside the model (both exogenous and endogenous variables).
The functional dependencies between variables are formalised through structural equations.
There also exists a rule-based approach that uses logical language to capture causal relations (see \cite{bochman2018actual} and \cite{lorni2023rule}), but we focus on the structural model framework due to its prominence in the literature \cite{Alechina_Halpern_Logan_2020,Beckers_Vennekens_2018,chockler2004responsibility,gladyshev2023dynamics}.

While causal models can in principle depict multi-agent systems by making a distinction between agent and environment events, they are less appropriate for reasoning about the abilities and strategies of agents.
Concurrent game structures (CGS) have been proposed to reason about agent interactions and strategies \cite{alur2002alternating}. These structures are graphs where nodes correspond to states of the world and edges, labelled with agents' actions, correspond to state transitions \cite{baier2008principles,gorrieri2017process}. 
In deterministic settings, an agent strategy specifies the actions to take by the agent.

Let us introduce an example of a causal model. 
Consider a semi-autonomous vehicle controlled jointly by a human driver and an automatic driving assistance system.
This driving assistance system is in turn supported by an obstacle detection system that signals to the driving assistant whether there is an obstacle in front of the vehicle.
Both the human driver and the driving assistant control the forward movement of the vehicle, though the human driver can always take full control.
In a scenario where there is an obstacle in front of the car, the obstacle causes the obstacle detection system to send a signal to the driving assistant. 
If the human driver is in a distracted state, this signal causes the driving assistant to avoid an accident.
This scenario can be described as a causal system, but can also be viewed as a multi-agent system where the obstacle detection system, the driving assistant and the human driver are all seen as agents that make decisions based on their state observations.

The fundamental relationship between structural causal models and multi-agent system models manifests itself in modelling phenomena such as responsibility for realising a certain outcome by a group of agents.
In the literature of multi-agent systems, both structural causal models and CGS are used to define the responsibility of a group of agents for an outcome \cite{chockler2004responsibility,yazdanpanah2019strategic}. Agents in a structural causal model are seen as responsible for an outcome if they have caused it \cite{chockler2004responsibility}. On the other hand, in a CGS a coalition of agents is deemed responsible for an outcome if they had a strategy to prevent it \cite{yazdanpanah2019strategic}. 
By establishing the relationship between structural causal models and CGS, different modeling approaches to multi-agent phenomena (e.g., responsibility) can be compared and unified. 

In this paper, we aim to establish a formal relationship between structural causal models and concurrent game structures by constructing a CGS for a given structural causal model such that if a group of agents is an actual cause for an outcome in the causal model, then this group had a strategy in the constructed CGS to prevent the outcome, provided the other agents act as prescribed by the causal model.
The CGS is built by distinguishing between agent and environment variables.
We consider the values of an agent variable as possible actions of the agent and interventions as agents' decisions. 
We provide several formal results on how strategies in this causal concurrent game structure (causal CGS) relate to the original structural causal model, establishing a formal relationship between structural causal models and CGS.
In particular, we show that a choice of actions by a group of agents is a cause of an outcome in a structural causal model (under the Halpern-Pearl definition of an actual cause) if and only if this set of agents has a strategy for the negation of the outcome in the corresponding causal CGS, provided the other agents act according to the causal model.
We believe that our framework will be beneficial for supporting causal inference in multi-agent systems, for example, for reasoning and attributing responsibility for certain outcomes to groups of agents. 

We will now first give some preliminaries on causality and concurrent game structures.
In Section \ref{sec:cgs from cm} we define the translation from a structural causal model to a causal CGS, after which, in Section \ref{sec:causality in CGS}, we show how causality in the structural causal model relates to agent strategies in the causal CGS.

\section{Background}
\label{sec:literature overview}
In this section, we introduce the structural causal model framework that we will use. We also shortly introduce concurrent game structures and give a formal definition of agent strategies. 

\begin{definition}[Causal Model, Causal Setting \cite{halpern2016actual}] \label{def:causal model}
    A \emph{causal model} $\mathcal{M}$ is a pair $(\mathcal{S},\mathcal{F})$, where $\mathcal{S}$ is a signature and $\mathcal{F}$ defines a set of structural equations, relating the values of the variables.
    A \emph{signature} $\mathcal{S}$ is a tuple $(\mathcal{U},\mathcal{V},\mathcal{R})$, where $\mathcal{U}$ is a set of exogenous variables, $\mathcal{V}$ is a set of endogenous variables and $\mathcal{R}$ associates with every variable $X\in \mathcal{U}\cup \mathcal{V}$ a nonempty set $\mathcal{R}(X)$ of possible values for $X$.
    
    A \emph{causal setting} is a tuple $(\mathcal{M},\mathbf{u})$, where $\mathcal{M}$ is a causal model and $\mathbf{u}$ a setting for the exogenous variables in $\mathcal{U}$.
\end{definition}

The \emph{exogenous variables} are variables whose values depend on factors outside of the model, their causes are not explained by the model \cite{halpern2016actual,Pearl2016causal}.
On the other hand, the \emph{endogenous variables} are fully determined by the variables in the model. 
Note that with $\myvec{u}$, we use the bold-face notation to denote that $\mathbf{u}$ is a tuple.
When we use this bold-face notation for capital letters $\myvec{X}$ and $\myvec{Y}$, we are slightly abusing notation by treating them both as tuples and as sets. This follows Halpern's use of the vector notation for both concepts \cite{halpern2016actual}.
This means that we can write $\myvec{X} = \myvec{x}$ to indicate that the first element of $\myvec{X}$ gets assigned the value of the first element of $\myvec{x}$ and so on, but that we can also write $\myvec{X}'\subseteq \myvec{X}$.

\begin{example}\label{ex:causal model}
    Consider the semi-autonomous vehicle example we discussed in the introduction. 
    We can model this example with exogenous binary variables $U_O$ that determines whether there will be an obstacle on the route, and $U_{Att}$ that determines whether the human driver is paying attention.
    For the endogenous variables we introduce the binary variables $O$, indicating that there is an obstacle, $Att$, indicating that the human driver is paying attention, $HD$ for whether the human driver keeps driving or brakes. Note that we use $HD$ when the human driver keeps driving ($\neg HD$ indicates that they brake). $ODS$, indicating that the obstacle detection system detects an obstacle, $DA$, for whether the driving assistant keeps driving or brakes. Note that we use $DA$ when the human driver keeps driving ($\neg DA$ indicates that they brake). And $Col$, indicating a collision.
    The set $\mathcal{U}$ is hence $\set{U_O,U_{Att}}$ and the set $\mathcal{V}$ is hence $\set{O,Att, HD, ODS,DA, Col}$.
    We consider all variables to be Boolean, so for any variable $X \in \mathcal{U} \cup \mathcal{V}$, $\mathcal{R}(X) = \set{0,1}$.

    The following structural equations are defined for this model:
    \begin{equation*}
        \begin{array}{ll}
            O := U_O & Att:= U_{Att}  \\
            HD := \neg O \vee (O \wedge \neg Att) & ODS:= O \\
            DA := HD \wedge \neg ODS & Col := DA \wedge HD \wedge O.
        \end{array}
    \end{equation*}
\end{example}

A \emph{causal network} is a directed graph with nodes corresponding to the causal variables in $\mathcal{V}$ (and $\mathcal{U}$) with an edge from the node labelled $X$ to the node labelled $Y$ if and only if the structural equation for $Y$ depends on $X$.
In other words, we put an edge from node $X$ to node $Y$ if and only if $X$ can influence the value of $Y$ \cite{halpern2005causes}.
We call $Y$ a \emph{descendant} of $X$ if the graph contains a path from $X$ to $Y$.

A model that has an acyclic causal network is called strongly recursive \cite{halpern2016actual}. In such models, a setting $\mathbf{u}$ of the exogenous variables $\mathcal{U}$ fully determines the values of all other (endogenous) variables. We call a causal model with an acyclic causal network recursive because the exogenous variables determine the values of the endogenous variables in a recursive manner. 
As Halpern explains, some endogenous variables only depend on exogenous variables, we call them first-level variables \cite{halpern2016actual}. 
They get their value directly from the causal setting.
After that there are the second-level variables, the endogenous variables that depend on both the first-level variables and possibly on the exogenous variables.
Likewise the third-level variables depend on the second-level variables, and possibly on the exogenous and the first-level variables, and so on for higher levels.
We only focus on strongly recursive models in this paper.

\begin{example}\label{ex:causal network}
    The causal network for the causal model as described in Example \ref{ex:causal model} is given in Figure \ref{fig:ex causal network} (the exogenous variables are not drawn).
    The graph makes it easy to see that the causal model is recursive, i.e. the causal network does not contain cycles.
    \begin{figure}[b]
        \centering
        \setlength{\unitlength}{1cm}
        \begin{picture}(5,3.05)(-0.5,0)
            \put(0,0){\circle*{0.1}}
            \put(0,1.5){\circle*{0.1}}
            \put(2,0){\circle*{0.1}}
            \put(2,1.5){\circle*{0.15}}
            \put(2,3){\circle*{0.1}}
            \put(4,1.5){\circle*{0.1}}

            \put(0,0){\vector(1,0){1.95}}
            \put(0,1.5){\vector(4,3){1.95}}
            \put(0,1.5){\vector(1,0){1.95}}
            \put(0,1.5){\vector(4,-3){1.95}}
            \put(2,0){\vector(4,3){1.95}}
            \put(2,0){\vector(0,1){1.45}}
            \put(2,3){\vector(4,-3){1.95}}
            \put(4,1.5){\vector(-1,0){1.95}}

            \put(-0.15,0){\makebox(0,0)[r]{{$Att$}}}
            \put(-0.15,1.5){\makebox(0,0)[r]{{$O$}}}
            \put(2.15,-0.05){\makebox(0,0)[l]{{$HD$}}}
            \put(2,1.65){\makebox(0,0)[b]{\Large{$Col$}}}
            \put(1.85,3.05){\makebox(0,0)[r]{{$ODS$}}}
            \put(4.15,1.5){\makebox(0,0)[l]{{$DA$}}}
        \end{picture}
        \caption{The causal network for the causal model for the semi-autonomous vehicle example described in Example \ref{ex:causal model}.}
        \label{fig:ex causal network}
        \Description{A graph depicting the causal network of the causal model corresponding to the running semi-automated vehicle example. The graph has 6 nodes, each corresponding to one of the variables of the causal model, Att, O, HD, ODS, DA, and Col. Att has an edge to HD, O has edges to HD, Col and ODS. HD has an edge to DA and Col, ODS has an edge to DA, and DA has an edge to Col. There are no other edges.}
    \end{figure}
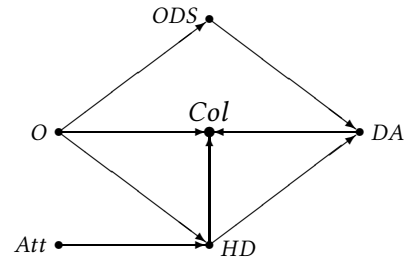
    We can also see the variable levels. 
    $O$ and $Att$ are first-level variables, they only depend on the exogenous variables.
    $HD$ and $ODS$ only depend on $O$ and $Att$ and hence are second-level variables. 
    $DA$ is a third-level variable, as it depends on second-level variables, and $Col$ is a fourth-level variable, as it depends on both $DA$ and lower-level variables.
\end{example}

Given a signature $\mathcal{S} = (\mathcal{U},\mathcal{V},\mathcal{R})$, a formula of the form $X=x$, for $X\in \mathcal{V}$ and $x\in\mathcal{R}(X)$ is called a \emph{primitive event} \cite{halpern2005causes,halpern2016actual}. These primitive events can be combined with the Boolean connectives $\wedge, \vee$ and $\neg$, to form a \emph{Boolean combinations of primitive events} \cite{halpern2005causes,halpern2016actual}. 
We follow Halpern and use $\csetting \models \phi$ to denote that formula $\phi$ holds given the values of all variables determined by the causal setting $\csetting$ (see~\cite{halpern2016actual} for details).
A \emph{causal formula} has the form $[Y_1\leftarrow y_1, ... , Y_k \leftarrow y_k]\varphi$, where $\varphi$ is a Boolean combination of primitive events, $Y_1,...,Y_k \in \mathcal{V}$ with
$Y_i = Y_j$ if and only if $i = j$, and $y_i\in\mathcal{R}(Y_i) \text{ for all } 1 \leq i \leq k$. Such a formula can be shortened to $[\myvec{Y}\leftarrow\mathbf{y}]\varphi$, and when $k=0$ it is written as just $\varphi$ \cite{halpern2005causes}.
$\csetting \models [\myvec{Y}\leftarrow\mathbf{y}](X = x)$ says that after an intervention that sets all variables of $\myvec{Y}$ to $\mathbf{y}$, it must be the case that $X = x$ holds in the causal setting $\csetting$ (see~\cite{halpern2005causes,halpern2016actual} for more details).
We call $\mathbf{y}$ a \emph{setting} for the variables in $\mathbf{Y}$.
We now have the necessary background to give the modified HP definition of causality:
\begin{definition}[modified HP Definition \cite{halpern2016actual}]\label{def:HP}
$\myvec{X} = \myvec{x}$ is an \emph{actual cause} of $\vphi$ in the causal setting $(\mathcal{M},\mathbf{u})$ if the following 3 conditions hold:
\begin{itemize}
    \item[\textnormal{AC1.}]  $(\mathcal{M},\mathbf{u})\models \myvec{X} = \myvec{x}$ and $(\mathcal{M},\mathbf{u})\models \vphi$;
    \item[\textnormal{AC2.}] There is a set $\myvec{W}$ of variables in $\mathcal{V}$ and a setting $\myvec{x}'$ of variables in $\myvec{X}$ s.t. if $(\mathcal{M},\mathbf{u})\models \myvec{W} = \mathbf{w}^\ast$, then $(\mathcal{M},\mathbf{u}) \models [\myvec{X} \leftarrow \myvec{x}', \myvec{W} \leftarrow \mathbf{w}^\ast] \neg \vphi$.
    \item[\textnormal{AC3.}] $X$ is minimal; there is no strict subset $\myvec{X}'$ of $\myvec{X}$ s.t. $\myvec{X}' = \myvec{x}'$ satisfies \textnormal{AC1} and \textnormal{AC2}, where $\myvec{x}'$ is the restriction of $\mathbf{x}$ to the variables in $\myvec{X}'$.
\end{itemize}
\end{definition}
If $\myvec{W} = \emptyset$, we call $\myvec{X} = \myvec{x}$ a \emph{but-for cause} of $\vphi$.

\begin{example}\label{ex:causes}
    Consider our semi-autonomous vehicle example again. 
    Take the causal setting where $\mathbf{u} = (1,0)$, i.e. $U_O = 1$, there is an obstacle on the route, and $U_{Att} = 0$, the human driver is not paying attention.
    Following the equations provided in Example~\ref{ex:causal model}, we have that $\csetting \models O \wedge \neg Att \wedge HD \wedge ODS \wedge \neg DA \wedge \neg Col$. 
    We want to know which agent was the cause of there being no collision.
    It turns out that both $ODS$ and $\neg DA$ are but-for causes of $\neg Col$, i.e., $\csetting \models [ODS \leftarrow 0] Col$ and $\csetting \models [DA \leftarrow 1] Col$.
    After all, if we intervene by turning off the object detection system $ODS$ (setting its value to $0$ in our model, i.e., replacing equation $ODS=1$ in our model with $ODS=0$, which is formally represented as $[ODS \leftarrow 0]$), the driving assistant $DA$ will no longer get a signal that there is an obstacle on the route.
    This gives $DA =1$, meaning that the driving assistant will not brake. 
    Because the human driver is distracted in this setting, they will also not brake, and so there will be a collision. 
    Similarly we can also directly intervene on the driving assistant by turning it off (setting its value to $1$, not braking, in our model by replacing the equation for $DA$ with $DA:=1$, represented by $[DA \leftarrow 1]$) and there will be a collision as well.
\end{example}

The aim of this work is to connect this concept of structural causal models and causality to concurrent game structures. We use the following definition of concurrent game structures:
\begin{definition}[Concurrent Game Structures \cite{alur2002alternating}]\label{def:concurrent game structures}
    A \emph{concurrent game structure} (CGS) is a tuple $GS = \langle N, Q, d, \delta, \Pi, \pi \rangle$ with the following components:
    \begin{itemize}
        \item A natural number $N \geq 1$ of agents. We identify the \emph{agents} with the numbers $1, . . . , N$.
        \item A finite set $Q$ of states. 
        \item For each agent $a \in \{1, . . . , N\}$ and each state $q \in Q$, a natural number $d_a (q) \geq 1$ of moves available at state $q$ to agent $a$. We identify the moves of agent $a$ at state $q$ with the numbers $1, . . . , d_a (q)$. For each state $q \in Q$, a move vector at $q$ is a tuple $\langle j_1, . . . , j_N\rangle$  such that $1 \leq j_a \leq d_a (q)$ for each agent $a$. Given a state $q \in Q$, we write $D(q)$ for the set $\{1, . . . , d_1 (q)\} \times \dots \times \{1, . . . , d_N (q)\}$ of \emph{move vectors}. The function $D$ is called \emph{move function}.  
        \item For each state $q \in Q$ and each move vector $\langle j_1, . . . , j_N\rangle \in D(q)$, a state $\delta(q, j_1, . . . , j_N ) \in Q$ that results from state $q$ if every agent $a \in \{1, . . . , N\}$ chooses move $j_a$ . The function $\delta$ is called \emph{transition function}.
        \item A finite set $\Pi$ of \emph{propositions}. 
        \item For each state $q \in Q$, a set $\pi (q) \subseteq \Pi$ of propositions true at q. The function $\pi$ is the \emph{labelling function}. 
    \end{itemize}
\end{definition}

When we have a CGS, we can reason about what the optimal actions for a coalition of agents would be in a certain situation. We often use the concept of strategies for this.

\begin{definition}[Strategy in Concurrent Game Structures \cite{alur2002alternating}]
    Given a concurrent game structure $S = \langle N, Q, d, \delta, \Pi, \pi \rangle$, a \emph{strategy} for agent $a\in\set{1,...,N}$ is a function $f_a$, that maps any (non-empty) finite sequence $\lambda$ of states in $Q$ to an action the agent can take at the last state of the sequence. I.e. if $q$ is the last state of $\lambda$, then $f_a(\lambda) \leq d_a(q)$.
    We write $F_A = \{f_a\ | \ a \in A\}$ for a set of strategies of the agents in $A\subseteq \set{1,...,N}$. 
\end{definition}

We now have all preliminaries ready to move on and combine causality with concurrent game structures.

\section{From Causal Model to CGS}
\label{sec:cgs from cm}
The goal of this paper is to define a systematic approach to generate a causal CGS based on a strongly recursive structural causal model. The motivation is that we want to compare the strategic ability of coalitions of agents to realise outcomes to causes in the causal model.
Similar translations have been attempted by \cite{gladyshev2023dynamics,baier2021game-theoretic} and \cite{hammond2023ReasoningCausalityGames}.

Gladyshev et al. make, like us, a distinction between agent and environment variables, and they also construct a CGS that takes the causal structure between agents' decision and environment variables into account \cite{gladyshev2023dynamics} . 
However, they take a `zoomed out' approach to the causal model by considering every state in the CGS as a causal model. 
In contrast, in this paper, we are interested in the specific variable values, which we will consider as specific actions in strategic setting.
Another difference with our work is that they do not look at the relationship between causality in the original causal model and strategies in the CGS.

A more similar approach to ours was defined by Baier et al. \cite{baier2021game-theoretic}, but they use extensive form games rather than CGS, and  do not distinguish between agent and environment variables. 
Furthermore, while they do show a result relating actual causality in the causal model to some type of strategy in their extensive form game, they only do this for but-for causes, where we consider the modified HP definition as well. 

Hammond et al. translate the causal model to a multi-agent influence diagram (MAID) that includes utility variables, with the primary goal of studying rational outcomes of the grand coalition \cite{hammond2023ReasoningCausalityGames}.
They hence take a game-theoretic approach, where we take a logic-based approach by focusing on strategic abilities of coalitions of agents.
Nevertheless, we could also apply a game-theoretic analysis to our model, by extending our CGS to include utility variables.
This is however beyond the scope of this paper.

\subsection{Defining a Causal CGS}
In this section we will propose a systematic approach to generate a causal concurrent game structure based on a strongly recursive structural causal model.
We will use the notion of first-level, second-level and higher-level variables as explained in the previous section to determine in which order the agents of the causal model will get to take actions.
For this we define the notion of agent rank:
\begin{definition}
    An \emph{agent ranking function} of a causal model $\mathcal{M}$ is a function $\rho: \mathcal{V} \rightarrow \set{0,...,n}$, where $n$ is the number of distinct variable levels for agent variables in $\mathcal{M}$, such that for all $A,B \in V_a$, $\rho(A) > \rho(B) > 0$ if and only if the variable level of $A$ is higher than the variable level of $B$, and $\rho(A) = \rho(B)$ if and only if $A$ and $B$ have the same variable level. 
    For all $X \in V_e$, $\rho(X) = \rho(A) - 1$ if $\exists A \in V_a$ such that the variable level of $X$ is lower or equal to the variable level of $A$, and there is no $B \in V_a$ that has a variable level between $X$ and $A$. If such an $A$ does not exist, i.e. if the variable level of $X$ is higher than the variable level of all $A \in V_a$, then $\rho(X) = n$.
    The \emph{agent rank} of a variable $A \in V_a$ is $\rho(A)$.
\end{definition}

\begin{example}\label{ex:agent rank}
    In the semi-automated vehicle example we say that $HD$, $ODS$ and $DA$ are the agent variables. 
    We have that $n = 2$ as $HD$ and $ODS$ are both second-level variables and $DA$ is a third level variable as Example \ref{ex:causal network} discusses. 
    There are hence $2$ distinct variable levels for the agent variables.
    From this, it follows that $\rho(HD) = \rho(ODS) = 1$ and $\rho(DA) = 2$, as the variable level of $DA$ is higher than that of $HD$ and $ODS$ and their agent rank needs to be higher than $0$ and maximally $2$.
    For the environment variables, we have that $\rho(O) = \rho(Att) = \rho(HD) - 1 = 0$, because there are no first-level agent variables, so we need a second-level agent variable like $HD$.
    Finally, we have that $\rho(Col) = 2$, since the variable level of $Col$ is $4$ which is higher than all agent variable levels, and hence the agent rank of $Col$ will be the maximum of $2$.
\end{example}

We will first define several components of the causal CGS separately before putting them all together.
From now on, we will assume that all causal models are recursive and have variables which can only attain finitely many values. Moreover we assume that a set of agent variables $V_a \subseteq \mathcal{V}$ is given. 

\begin{definition}[States of a causal CGS]\label{def:states causal CGS}
Given a causal setting $(\mathcal{M},\mathbf{u})$, let $n = \max_{Y\in V_a} \rho(Y)$ be the maximum value of the agent ranks for the agents in $V_a$ and let $m_i = \prod_{\substack{Y \in V_a,\\ \rho(Y) \leq i}} |\mathcal{R}(Y)|$  be the number of possible combinations of action values for agents with an agent rank of no more than $i$. 
The set of \emph{states of a causal CGS}, $Q$, generated based on  $(\mathcal{M},\mathbf{u})$, is given by:
    \begin{equation*}
        Q = \set{q_{0,0}} \cup \set{q_{i,j}\ |\ 1 \leq i \leq n \text{ and }  0 \leq j < m_i }.
    \end{equation*}
\end{definition}
We call $q_{0,0}$ the \emph{starting state} of the causal CGS.
Later, we will see that the evaluation in a state $q_{i,j}$ follows from the actions of agents whose agent variables have agent rank $i$ or less.
\begin{example}\label{ex:states causal CGS}
    We will use the causal model for the semi-automated vehicle example to define a causal CGS (see Figure \ref{fig:ex causal network}). 
    See Example \ref{ex:agent rank} for the agent rank of all variables of the causal model.
    We start with the setting $(\mathcal{M},\mathbf{u})$ with $\mathbf{u} = (U_{O} = 1,U_{Att} = 0)$.
    The set of states is then
    $Q = \{q_{0,0},q_{1,0},q_{1,1}, q_{1,2},q_{1,3},q_{2,0},q_{2,1},q_{2,2},q_{2,3},q_{2,4},$\\$q_{2,5},q_{2,6},q_{2,7}\}.$
    Note that: \newline
    $\prod_{Y \in V_a, \rho(Y) \leq 1} |\mathcal{R}(Y)| = \prod_{Y \in \set{HD,ODS}} |\mathcal{R}(Y)| = |\set{0,1}\times \set{0,1}| = 4$ and
    $\prod_{Y \in V_a, \rho(Y) \leq 2} |\mathcal{R}(Y)| = \prod_{Y \in \set{HD, ODS, DA}} |\mathcal{R}(Y)|  = 8$, so for $i = 1$, we have $j \in \{0,\ldots,3\}$ and for $i = 2$, we have $j \in \{0,\ldots,7\}$.
    These are all the states, because the maximum value of the agent rank $\rho$ is $2$.
\end{example}

We will now define the agent actions in those states.

\begin{definition}[Actions in a causal CGS]\label{def:actions causal CGS}
    Given a causal setting $(\mathcal{M},\mathbf{u})$ and $Q$ the corresponding set of states as defined by Definition \ref{def:states causal CGS}. The possible \emph{actions} for an agent $k \in \set{1,...,N}$ in a state $q_{i,j} \in Q$ are $d_k(q_{i,j}) = \mathcal{R}(A_k)$, where $A_k$ is the agent variable controlled by agent $k$, and $\rho(A_k) = i+1$. Otherwise $d_k(q_{i,j}) = \set{0}$.
\end{definition}

The intuition behind this definition is that agent variables that are earlier on a causal path will earlier get to take an action as the agent variables later on a causal path depend on them. The order of agent variables on a causal path can be seen as representing a protocol that determines when each agent has to take its action.   
We write $a_k$ to denote an action of agent $k \in N$ and $\mathbf{a}_{i,j} = \langle a_1,...,a_N\rangle$ to denote an action profile taken in a certain state $q_{i,j}$, i.e., all actions taken by all agents in state $q_{i,j}$. It is important to note that for a given index $i$ all states $q_{i,j}$ have the same action profiles that can be taken in them, regardless of the value of $j$. We denote this set with $\mathbf{A}_i$. 
Instead of $d_k$ for agent $k$, we will sometimes write $d_{A_k}$ for the agent variable $A_k$ corresponding to agent $k$.

\begin{example}\label{ex:actions causal CGS}
    We continue with the situation as in Example \ref{ex:states causal CGS}.
    The available actions for each agent in each state are: \begin{equation*}
        \begin{array}{ll}
            d_{{HD}}(q_{0,0}) = d_{{ODS}}(q_{0,0}) = \set{0,1}, & d_{{DA}}(q_{0,0}) = \emptyset, \\
            d_{{HD}}(q_{1,j}) = d_{{ODS}}(q_{1,j}) = \emptyset,  & d_{{DA}}(q_{1,j}) = \set{0,1}, \\
            \forall j \in \set{0,\ldots,3},  \text{ and} & \\
            d_{{HD}}(q_{2,j}) = d_{{ODS}}(q_{2,j}) = \emptyset,  & d_{{DA}}(q_{2,j}) = \emptyset, \\
            \forall j \in \set{0,\ldots,7}.&
        \end{array}
    \end{equation*}
\end{example}

These actions must of course lead to transitions to new states. 
\begin{definition}[Transitions in a causal CGS]\label{def:transitions causal CGS}
    Given a causal setting $(\mathcal{M},\mathbf{u})$, $Q$ the corresponding set of states as defined by Definition \ref{def:states causal CGS} and actions as defined by Definition \ref{def:actions causal CGS}, the state following from the action profile $\mathbf{a}_{i,j} \in \mathbf{A}_i$, with $i < \max_{X\in V_a} \rho(X)$, is given by the transition function $\delta$, where $\delta(q_{i,j},\mathbf{a}_{i,j}) = q_{i+1,j'}$ and $|\mathbf{A}_i| \cdot j \leq j'\leq |\mathbf{A}_i| \cdot (j+1) -1$, under the condition that if $\mathbf{a}_{i,j} \neq \mathbf{a}'_{i,j}$, then $\delta(q_{i,j},\mathbf{a}_{i,j}) \neq \delta(q_{i,j}, \mathbf{a}'_{i,j})$.
    If $i = \max_{X\in V_a} \rho(X)$, we define $\delta(q_{i,j},\mathbf{a}_{i,j}) = q_{i,j}$. In this case, there is only one possible action profile $\mathbf{a}_{i,j}$ consisting of only the $0$ action. 
\end{definition}
This definition simply says that every unique action profile in a state leads to a unique new state. This leads to the causal CGS having a tree structure. It is impossible to return to an earlier state and every node can only branch out
\begin{example}\label{ex:transitions causal CGS}
    Continuing with our running example, we will write $\langle 1, 0, 0 \rangle$ for the action profile $\langle HD = 1, ODS = 0, DA = 0 \rangle$. 
    We get that the transitions are:
    \newline 
    \begin{equation*}
        \begin{array}{ll}
            \delta(q_{0,0}, \langle 0, 0, 0\rangle) = q_{1,0},  & 
            \delta(q_{0,0}, \langle 0, 1, 0\rangle) = q_{1,1}, \\
            \delta(q_{0,0}, \langle 1, 0, 0\rangle) = q_{1,2},  & 
            \delta(q_{0,0}, \langle 1, 1, 0\rangle) = q_{1,3}, \\
            \delta(q_{1,0},\langle 0, 0, 0\rangle) = q_{2,0}, &
            \delta(q_{1,0}, \langle 0, 0, 1\rangle) = q_{2,1}, \\
            \delta(q_{1,1},\langle 0, 0, 0\rangle) = q_{2,2}, &
            \delta(q_{1,1}, \langle 0, 0, 1\rangle) = q_{2,3}, \\
            \delta(q_{1,2},\langle 0, 0, 0\rangle) = q_{2,4}, &
            \delta(q_{1,2}, \langle 0, 0, 1\rangle) = q_{2,5}, \\
            \delta(q_{1,3},\langle 0, 0, 0\rangle) = q_{2,6}, &
            \delta(q_{1,3}, \langle 0, 0, 1\rangle) = q_{2,7}, \\
            \delta(q_{2,j},\langle 0, 0, 0\rangle) = q_{2,j} & \forall j \in \set{0,\ldots,7}.
        \end{array}
    \end{equation*}
\end{example}

Now that we have states, actions and transitions, we just need the evaluations of the states.
The evaluation of a state will depend on an initial causal setting and the actions the agents have taken up to this state. The agents fully determine the values of the agent variables, the environment variables follow from these values and the context that was used to define the causal CGS. 

\begin{definition}[Evaluation of states in a causal CGS]\label{def:evaluations in a causal CGS}
    Given a causal setting, $(\mathcal{M},\mathbf{u})$, the set of all possible propositions for the generated causal CGS is $\Pi = \{ X= x \ | \ X \in \mathcal{V}, x \in \mathcal{R}(X)\}$.
    The valuation of each state $q_{i,j} \in Q$, with $Q$ the set of states of the causal CGS according to Definition \ref{def:states causal CGS}, is defined recursively by the \emph{labelling function} $\pi$, as:
    \begin{equation*}
        \begin{array}{ll}
            \pi(q_{0,0})  &= \set{Y = y \ | \ \csetting \models Y = y} \\
            \pi(\delta(q_{i,j}, \myvec{a}_{i,j})) &= \set{Y = y \ | \ \csettingint{\myvec{X}_{i,j}\leftarrow \myvec{x}_{i,j}, \myvec{A}_{i,j} \leftarrow \myvec{a}_{i,j}}\models Y = y},
        \end{array}
    \end{equation*}
    where $\myvec{a}_{i,j}$ is an action profile for state $q_{i,j}$, $\myvec{A}_{i,j} \leftarrow \myvec{a}_{i,j} := \{A_k\leftarrow a_k \ | \ A_k \in V_a, \rho(A_k) = i+1 \text{ and } a_k \in \myvec{a}_{i,j}\}$ is an intervention constructed based on action profile $\myvec{a}_{i,j}$, and $\myvec{X}_{i,j}\leftarrow \myvec{x}_{i,j}$ is recursively defined by: $\myvec{X}_{i+1,j'} \leftarrow \myvec{x}_{i+1,j'}:= \myvec{X}_{i,j}\leftarrow \myvec{x}_{i,j} \cup \myvec{A}_{i,j}\leftarrow \myvec{a}_{i,j}$, if $\delta(q_{i,j}, \myvec{a}_{i,j}) = q_{i+1,j'}$ with $\myvec{X}_{0,0} \leftarrow \myvec{x}_{0,0}= \emptyset$.
\end{definition}

Definition \ref{def:evaluations in a causal CGS} says that an agent action leads to an intervention on the causal setting the causal CGS was based upon. 
We can see $\mathbf{A}_{i,j} \leftarrow \mathbf{a}_{i,j}$ as the intervention that directly follows from the agent action(s) taken in the state $q_{i,j}$, $\mathbf{X}_{i,j} \leftarrow \mathbf{x}_{i,j}$ stores the previous interventions that were made leading up to the state $q_{i,j}$.
We will illustrate this in the following example.
\begin{example}\label{ex:evaluations CGS}
    We continue with the situation as in Example \ref{ex:transitions causal CGS}. 
    We start with the causal setting where $U_O = 1$ and $U_{Att} = 0$, so $\pi(q_{0,0}) = \set{O, \neg Att, HD, ODS, \neg DA, \neg Col}$.
    To determine $\pi(q_{1,0}) = \pi(\delta(q_{0,0}, \langle 0,0,0 \rangle ))$, we need $\myvec{A}_{0,0} \leftarrow \myvec{a}_{0,0} = \set{HD \leftarrow 0, ODS \leftarrow 0}$. This gives us that $\pi(q_{1,0}) = \set{Y = y \ | \ \csettingint{HD \leftarrow 0, ODS \leftarrow 0} \models Y = y} $ $= \set{O, \neg Att, \neg HD, \neg ODS, \neg DA, \neg Col}$.
    Similarly we can determine that $\pi(q_{1,1}) = \{O, \neg Att, \neg HD,  ODS, \neg DA, \neg Col\}$, $\pi(q_{1,2}) = \{O, \neg Att, $\\$ HD, \neg ODS, DA, Col\}$ and $\pi(q_{1,3}) = \set{O, \neg Att, HD,  ODS, \neg DA, \neg Col}$.

    Let us now look at $\pi(q_{2,1}) = \pi(\delta(q_{1,0}, \langle 0,0,1\rangle))$.
    We need $\myvec{X}_{1,0} \leftarrow \myvec{x}_{1,0} = (\myvec{X}_{0,0} \leftarrow \myvec{x}_{0,0} \ \cup \ \myvec{A}_{0,0} \leftarrow \myvec{a}_{0,0}) =  \emptyset \ \cup \set{HD \leftarrow 0, ODS \leftarrow 0}$ as we determined above.
    The new $\myvec{A}_{1,0} \leftarrow \myvec{a}_{1,0} = \set{DA \leftarrow 1}$ and so $\pi(q_{2,1}) = \set{Y = y \ | \ \csettingint{HD \leftarrow 0, ODS \leftarrow 0, DA \leftarrow 1} \models Y = y } = \{O, \neg Att, \neg HD, \neg ODS, DA, \neg Col\}$.
    The valuations for the other states are determined similarly (and are shown in Figure \ref{fig:causal cgs vehicle}).

\end{example}

Now that we have these four definitions, we can give the full definition of a causal CGS.

\begin{definition}[Causal CGS]\label{def:causal CGS}
    Given a causal setting, $(\mathcal{M},\mathbf{u})$, a \emph{causal concurrent game structure} is defined as a tuple $GS = \langle N, Q, d,$\\$ \delta, \Pi, \pi\rangle$ where $N = |V_a|$, every agent only controls one agent variable, $Q$ is a set of states, as defined by Definition \ref{def:states causal CGS}. For every agent $k \in \set{1,...,N}$, $d_k(q_{i,j})$ gives the moves available to this agent in state $q_{i,j} \in Q$, as given by Definition \ref{def:actions causal CGS}.
    The transition function $\delta$ is defined as in Definition \ref{def:transitions causal CGS}.
    The set of possible propositions $\Pi$ and the valuation function $\pi$ are given by Definition \ref{def:evaluations in a causal CGS}.
\end{definition}

We can now add the results of the previous examples together and give a full causal CGS for the semi-automated vehicle example.

\begin{example}\label{ex:cgs rock-throwing}
     Using Definition \ref{def:causal CGS}, we define $N = |V_a| = |\set{HD, ODS, DA}| = 3$. This gives us a full causal CGS, illustrated in Figure \ref{fig:causal cgs vehicle}.

     \begin{figure}[ht]
    \centering
    \setlength{\unitlength}{0.9cm}
    \begin{picture}(7,7.3)(-0.25,0.8)

        \put(0,4.5){\circle{0.8}}
        \put(2,2){\circle{0.8}}
        \put(2,4){\circle{0.8}}
        \put(2,5){\circle{0.8}}
        \put(2,7){\circle{0.8}}
        \put(4,1){\circle{0.8}}
        \put(4,2){\circle{0.8}}
        \put(4,3){\circle{0.8}}
        \put(4,4){\circle{0.8}}
        \put(4,5){\circle{0.8}}
        \put(4,6){\circle{0.8}}
        \put(4,7){\circle{0.8}}
        \put(4,8){\circle{0.8}}
        
        \put(-0.23,4.5){\makebox(0,0)[l]{\footnotesize{$q_{0,0}$}}}
        \put(1.77,7){\makebox(0,0)[l]{\footnotesize{$q_{1,0}$}}}
        \put(1.77,5){\makebox(0,0)[l]{\footnotesize{$q_{1,1}$}}}
        \put(1.77,4){\makebox(0,0)[l]{\footnotesize{$q_{1,2}$}}}
        \put(1.77,2){\makebox(0,0)[l]{\footnotesize{$q_{1,3}$}}}
        \put(3.77,8){\makebox(0,0)[l]{\footnotesize{$q_{2,0}$}}}
        \put(3.77,7){\makebox(0,0)[l]{\footnotesize{$q_{2,1}$}}}
        \put(3.77,6){\makebox(0,0)[l]{\footnotesize{$q_{2,2}$}}}
        \put(3.77,5){\makebox(0,0)[l]{\footnotesize{$q_{2,3}$}}}
        \put(3.77,4){\makebox(0,0)[l]{\footnotesize{$q_{2,4}$}}}
        \put(3.77,3){\makebox(0,0)[l]{\footnotesize{$q_{2,5}$}}}
        \put(3.77,2){\makebox(0,0)[l]{\footnotesize{$q_{2,6}$}}}
        \put(3.77,1){\makebox(0,0)[l]{\footnotesize{$q_{2,7}$}}}

        \put(0.25,4.8){\vector(3,4){1.45}}
        \put(0.4,4.6){\vector(4,1){1.2}}
        \put(0.4,4.4){\vector(4,-1){1.2}}
        \put(0.25,4.2){\vector(3,-4){1.45}}
        \put(2.35,7.2){\vector(2,1){1.3}}
        \put(2.4,7){\vector(1,0){1.2}}
        \put(2.35,5.2){\vector(2,1){1.3}}
        \put(2.4,5){\vector(1,0){1.2}}
        \put(2.4,4){\vector(1,0){1.2}}
        \put(2.35,3.8){\vector(2,-1){1.3}}
        \put(2.4,2){\vector(1,0){1.2}}
        \put(2.35,1.8){\vector(2,-1){1.3}}

        \put(0.45,5.5){\rotatebox{53}{\footnotesize{$\langle 0, 0, 0\rangle$}}}
        \put(0.4,4.7){\rotatebox{14}{\footnotesize{$\langle 0, 1, 0\rangle$}}}
        \put(0.4,4.13){\rotatebox{-14}{\footnotesize{$\langle 1, 0, 0\rangle$}}}
        \put(0.45,3.4){\rotatebox{-53}{\footnotesize{$\langle 1, 1, 0\rangle$}}}
        \put(2.5,7.4){\rotatebox{26.5}{\footnotesize{$\langle 0, 0, 0\rangle$}}}
        \put(3,7){\makebox(0,0)[b]{\footnotesize{$\langle 0, 0, 1\rangle$}}}
        \put(2.5,5.4){\rotatebox{26.5}{\footnotesize{$\langle 0, 0, 0\rangle$}}}
        \put(3,5){\makebox(0,0)[b]{\footnotesize{$\langle 0, 0, 1\rangle$}}}
        \put(3,3.95){\makebox(0,0)[t]{\footnotesize{$\langle 0, 0, 0\rangle$}}}
        \put(2.5,3.45){\rotatebox{-26.5}{\footnotesize{$\langle 0, 0, 1\rangle$}}}
        \put(3,1.95){\makebox(0,0)[t]{\footnotesize{$\langle 0, 0, 0\rangle$}}}
        \put(2.5,1.45){\rotatebox{-26.5}{\footnotesize{$\langle 0, 0, 1\rangle$}}}

        \put(-0.5,4.5){\makebox(0,0)[r]{\scriptsize{$\set{O,\neg Att}$}}}
        \put(1.7,7.5){\makebox(0,0)[b]{\scriptsize{$\set{\neg HD, \neg ODS}$}}}
        \put(1.73,5.5){\makebox(0,0)[b]{\scriptsize{$\set{\neg HD, ODS}$}}}
        \put(1.73,3.5){\makebox(0,0)[t]{\scriptsize{$\set{ HD, \neg ODS}$}}}
        \put(1.73,1.5){\makebox(0,0)[t]{\scriptsize{$\set{ HD, ODS}$}}}
        
        \put(4.43,8){\makebox(0,0)[l]{\scriptsize{$\set{O,\neg Att, \neg HD, \neg ODS, \neg DA, \neg Col}$}}}
        \put(4.43,7){\makebox(0,0)[l]{\scriptsize{$\set{O,\neg Att, \neg HD, \neg ODS, DA, \neg Col}$}}}
        \put(4.43,6){\makebox(0,0)[l]{\scriptsize{$\set{O,\neg Att, \neg HD, ODS, \neg DA, \neg Col}$}}}
        \put(4.43,5){\makebox(0,0)[l]{\scriptsize{$\set{O,\neg Att, \neg HD, ODS, DA, \neg Col}$}}}
        \put(4.43,4){\makebox(0,0)[l]{\scriptsize{$\set{O,\neg Att, HD, \neg ODS, \neg DA, \neg Col}$}}}
        \put(4.43,3){\makebox(0,0)[l]{\scriptsize{$\set{O,\neg Att,  HD, \neg ODS, DA, Col}$}}}
        \put(4.43,2){\makebox(0,0)[l]{\scriptsize{$\set{O,\neg Att, HD, ODS, \neg DA, \neg Col}$}}}
        \put(4.43,1){\makebox(0,0)[l]{\scriptsize{$\set{O,\neg Att,  HD, ODS, DA, Col}$}}}
        
    \end{picture}
    \caption{The causal CGS of the semi-automated vehicle example. We only show the initial values of the variables of agent rank $0$ in the starting state. In the middle states we only show the variables with agent rank corresponding to that state. We also do not show the transitions to the same state in the leaf-states.}
    \label{fig:causal cgs vehicle}
    \Description{A graph depicting the causal concurrent game structure for our running semi-automated vehicle example. The graph has a tree structure with maximal depth 2, there are 4 nodes of depth 1 and 8 leaf nodes. The root has 4 edges leaving it, the depth 1 nodes each have 2 edges leaving it. The root node is named q with the subscript 0,0, it is labelled with the propositions O and negation of Att. The four edges leaving the root node are labelled respectively 0,0,0; 0,1,0; 1,0,0; and 1,1,0. The four depth 1 nodes are named q with the subscripts 1,0 up until 1,3. They are labelled with the propositions: negation of HD, negation of ODS; negation of HD, ODS; HD, negation of ODS; and HD, ODS, respectively. For each of the depth 1 nodes, the edges leaving it are labelled 0,0,0 and 0,0,1. The leaf nodes are named q with the subscript 2,0 up and until 2,7. They are each respectively labelled with the propositions: O, negation of Att, negation of HD, negation of ODS, negation of DA, negation of Col; O, negation of Att, negation of HD, negation of ODS, DA, negation of Col;O, negation of Att, negation of HD, ODS, negation of DA, negation of Col; O, negation of Att, negation of HD, ODS, DA, negation of Col; O, negation of Att, HD, negation of ODS, negation of DA, negation of Col; O, negation of Att, HD, negation of ODS, DA, Col; O, negation of Att, HD, ODS, negation of DA, negation of Col; O, negation of Att, HD, ODS, DA, Col.}
\end{figure}
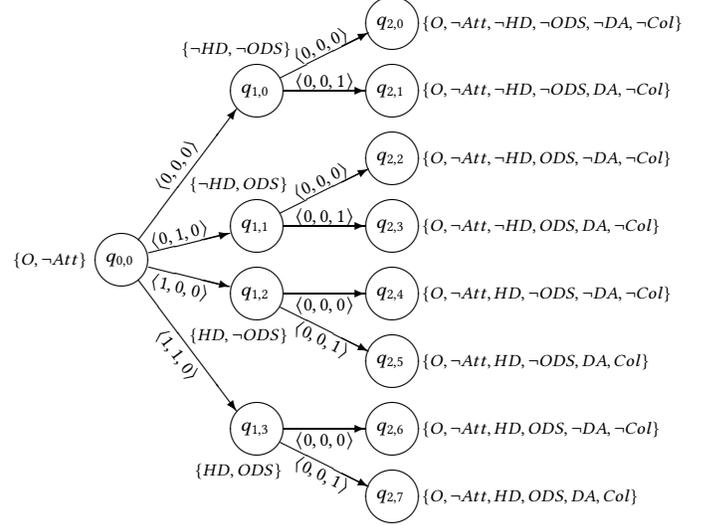
\end{example}

\subsection{Properties of Causal Concurrent Game Structures}
We already mentioned that a causal CGS has a tree structure. In the rest of this paper, we will call states $q_{i,j}$, with $i = \max_{X\in \mathcal{V}} \rho(X)$, the \emph{leaf-states}. 
We will call actions in states where an agent does not control a variable, i.e. $a_k = 0$, when $d_k(q_{i,j}) = \set{0}$, with $\rho(X) \neq i + 1$, \emph{no-op actions}. It is also useful to define an \emph{action path} for a state $q_{i,j}$, that contains all the non no-op actions that led to the state. In other words, the action path contains only the actions that agents took in a state where they could actually choose an action. We will denote this sequence of actions as $\alpha[q_{i,j}]$. 
Formally, for $0 \leq k \leq N$, an action $a_k$ is in this set of actions  $\alpha[q_{i,j}]$ if and only if $\rho(A_k)\leq i$ and there exists an action profile $\mathbf{a}_{i',j'}$, containing $a_k$, such that $q_{i',j'} \in \lambda[q_{i,j} , i]$  (the history of $q_{i,j}$) and $\delta(q_{i',j'} , \mathbf{a}_{i',j'}) \in \lambda[q_{i,j} , i]$. In other words, an action is on the action path for a state $q_{i,j}$, if the state $q_{i',j'}$ in which the action is taken lies on the history of $q_{i,j}$, and the successor of $q_{i',j'}$ can be reached when taking this action.

Our first result is on the size of the causal CGS.
\begin{proposition}
Let $\mathcal{M} = (\mathcal{S},\mathcal{F})$ be a causal model. The size of the causal CGS generated by $\mathcal{M}$ is linear in the size of the extension of $\mathcal{F}$.
\end{proposition}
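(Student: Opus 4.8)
The plan is to compare two numbers directly. Write $M := \prod_{Y \in V_a} |\mathcal{R}(Y)|$ for the number of complete assignments to the agent variables, and let $\|\mathcal{F}\|$ denote the size of the extension of $\mathcal{F}$, i.e.\ the total number of entries obtained by writing each structural equation $F_Y$ as an explicit table over assignments to the remaining variables. The only facts about $\|\mathcal{F}\|$ I will use are that $\|\mathcal{F}\| \ge M$ — which can be read off immediately, since the table for any endogenous variable outside $V_a$ already has at least $M$ rows (and when $V_a = \mathcal{V}$ one uses an endogenous table together with the exogenous ranges) — and that $\|\mathcal{F}\|$ is a sum over all of $\mathcal{V}$. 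So it suffices to show that every component of the causal CGS of Definition~\ref{def:causal CGS} has size $O(M)$.

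First I would count the states. By Definition~\ref{def:states causal CGS}, $|Q| = 1 + \sum_{i=1}^{n} m_i$ where $m_i = \prod_{Y \in V_a,\ \rho(Y) \le i} |\mathcal{R}(Y)|$. The key observation is that the agent ranking function maps $V_a$ \emph{onto} $\{1,\dots,n\}$, so every rank class $\{Y \in V_a : \rho(Y) = i+1\}$ is nonempty; since each agent range has at least two values, $m_{i+1} = m_i \cdot \prod_{\rho(Y) = i+1} |\mathcal{R}(Y)| \ge 2\,m_i$. Hence $m_1 \mid m_2 \mid \dots \mid m_n = M$ with the sizes at least doubling at each level, so $\sum_{i=1}^{n} m_i \le M \sum_{k \ge 0} 2^{-k} = 2M$ and $|Q| < 1 + 2M$. (Even without the doubling, $\sum_i m_i \le n\,M \le |\mathcal{V}|\,M$, which would still be good enough in view of the second fact about $\|\mathcal{F}\|$ above.)

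Next I would bound the transitions and the labelling. Because the structure is a tree (Definition~\ref{def:transitions causal CGS}), a state at level $i < n$ has out-degree $|\mathbf{A}_i| = \prod_{\rho(A_k) = i+1} |\mathcal{R}(A_k)| = m_{i+1}/m_i$ (with $m_0 = 1$), so the transition entries starting at level $i$ number $m_i \cdot (m_{i+1}/m_i) = m_{i+1}$; adding the $m_n$ self-loops at the leaf-states gives a total of $\big(\sum_{i=1}^{n} m_i\big) + m_n \le 3M$ transition entries. Each label $\pi(q)$ consists of exactly $|\mathcal{V}|$ primitive events, so the labelling contributes $O(|\mathcal{V}|\,M)$ symbols; if one insists on counting this in the size, the extra factor is charged against the fact that $\|\mathcal{F}\|$ is itself a $|\mathcal{V}|$-fold sum of tables. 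Putting the pieces together, the causal CGS has size $O(M) = O(\|\mathcal{F}\|)$.

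I expect the only real content to be the geometric collapse $\sum_i m_i = O(M)$ of the second step — the statement that the tree of intermediate decision states is, up to a constant, no larger than its set of leaves — which is exactly where the structure of the agent ranking function enters; everything else is routine bookkeeping with Definitions~\ref{def:states causal CGS}--\ref{def:evaluations in a causal CGS}. The one place where I would be careful is pinning down the intended reading of ``the extension of $\mathcal{F}$'' precisely enough to certify $\|\mathcal{F}\| \ge M$, and, if the labelling is counted in the size of the CGS, noting that the resulting factor $|\mathcal{V}|$ is absorbed because the extension of $\mathcal{F}$ already scales with $|\mathcal{V}|$.
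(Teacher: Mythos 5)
Your proposal is correct and follows essentially the same route as the paper's proof: bound $|Q|=1+\sum_i m_i$ by $O(\prod_{Y\in V_a}|\mathcal{R}(Y)|)$ via the at-least-doubling of the level sizes $m_i$ (the geometric-sum collapse), use the tree structure to bound the transitions by the same quantity, and compare against the extension of $\mathcal{F}$, which is at least that large. Your additional bookkeeping (the explicit $\|\mathcal{F}\|\ge M$ lower bound, the labelling cost absorbed by the $|\mathcal{V}|$ factor, and the fallback bound $\sum_i m_i\le |\mathcal{V}|\,M$ when some agent range is a singleton) only makes the argument slightly more careful than the paper's.
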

\begin{proof}
Consider a structural causal model $\mathcal{M} = (\mathcal{S},\mathcal{F})$. Observe that $\mathcal{F}$ specifies the value of each variable for all possible combinations of values of all other variables. Hence $\mathcal{F}$ corresponds to a table of size $|\mathcal{V}| \times \prod_{X \in \mathcal{V}} |\mathcal{R}(X)|$ (the number of cells), which is actually the extension of $\mathcal{F}$. We now show that the number of states in the causal CGS is $O( \prod_{Y \in V_a} |\mathcal{R}(Y)|)$.

By Definition \ref{def:states causal CGS} we have that the number of states of the causal CGS, is given by $|Q| = 1+ \sum_{i = 1}^{n} \prod_{\substack{Y \in V_a,\\ \rho(Y) \leq i}} |\mathcal{R}(Y)|$, where $n = \max_{Y\in V_a} \rho(Y)$. The number of leaf-states is hence given by $\prod_{\substack{Y \in V_a}} |\mathcal{R}(Y)| =: |R(V_a)|$.
The number of states for $i = n-1$ will be at most half $|R(V_a)|$, as there will be at least one variable of rank $n$ that is hence not included in $\prod_{\substack{Y \in V_a,\\ \rho(Y) \leq n-1}} |\mathcal{R}(Y)|$, and this variable will have at least two possible values. We can continue this argument until $i = 1$, which shows us that $|Q|$ is bounded by $1 + \frac{1}{2^{n-1}} |R(V_a)|+\dots +\frac{1}{2} |R(V_a)|+|R(V_a)| \leq 2 |R(V_a)|$. 
Hence the number of states in the causal CGS is $O( \prod_{Y \in V_a} |\mathcal{R}(Y)|)$.
Since a causal CGS is a tree and each state has at most one predecessor, the number of transitions (the size of $\delta$) is also 
$O( \prod_{Y \in V_a} |\mathcal{R}(Y)|)$, hence linear in the size of $\mathcal{F}$ in the original model.
\end{proof}

The statement in the following lemma is a direct consequence of the way the valuation of states is determined in a causal CGS. It states that a variable value cannot change in states corresponding to a higher agent rank than the agent rank of the variable itself.

\begin{lemma}\label{lem:no change after i}
    Let $GS$ be a causal CGS generated by the causal model $\mathcal{M}$. For any endogenous causal variable $X \in \mathcal{V}$ of $\mathcal{M}$, with $\rho(X) = i$, it holds that $(X = x) \in \pi(q_{i,j})$ for some state $q_{i,j}$ of $GS$, if and only if $(X = x) \in \pi(q_{i',j'})$ for all states $q_{i',j'}$ that are descendants of $q_{i,j}$.
\end{lemma}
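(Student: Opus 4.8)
The plan is to reduce the biconditional to one structural fact: for every state $q_{i,j}$ of agent rank $i$ and every descendant $q_{i',j'}$ of $q_{i,j}$ in $GS$ (necessarily $i' \geq i$), the valuations $\pi(q_{i,j})$ and $\pi(q_{i',j'})$ assign the same value to $X$. Since each $\pi(q)$ assigns exactly one value to every endogenous variable, both directions of the claimed equivalence then follow immediately (for ``$\Leftarrow$'' using that $q_{i,j}$ is among its own descendants).

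First I would unfold Definition~\ref{def:evaluations in a causal CGS}. By the recursion $\myvec{X}_{i+1,j'}\leftarrow\myvec{x}_{i+1,j'} := (\myvec{X}_{i,j}\leftarrow\myvec{x}_{i,j}) \cup (\myvec{A}_{i,j}\leftarrow\myvec{a}_{i,j})$, an easy induction on $i'$ gives $\pi(q_{i',j'}) = \{\, Y = y \mid \csettingint{\myvec{X}_{i',j'}\leftarrow\myvec{x}_{i',j'}} \models Y = y \,\}$, where the intervention $\myvec{X}_{i',j'}\leftarrow\myvec{x}_{i',j'}$ fixes exactly the agent variables of agent rank at most $i'$, to the values dictated by the action path $\alpha[q_{i',j'}]$. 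Because $GS$ is a tree and descendants only extend action paths, if $q_{i',j'}$ is a descendant of $q_{i,j}$ then $(\myvec{X}_{i,j}\leftarrow\myvec{x}_{i,j}) \subseteq (\myvec{X}_{i',j'}\leftarrow\myvec{x}_{i',j'})$, and the extra assignments in the larger intervention concern only agent variables of agent rank strictly between $i$ and $i'$.

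The heart of the proof is a locality property of interventions. If $X$ is itself an agent variable, then since $\rho(X) = i$ it is directly fixed by $\myvec{X}_{i,j}\leftarrow\myvec{x}_{i,j}$, and hence by the superset $\myvec{X}_{i',j'}\leftarrow\myvec{x}_{i',j'}$, so it keeps the same value in both causal settings. If $X$ is an environment variable, I would show that in the causal network of $\mathcal{M}$ the variable $X$ is not a descendant of any agent variable $Z$ with $\rho(Z) > i$. Since $\mathcal{M}$ is strongly recursive, the value of a variable under an intervention is computed by evaluating the structural equations (and the intervened constants) in level order, so it depends only on the ancestors of that variable; as mutilating a model can only remove ancestors, the assignments present in $\myvec{X}_{i',j'}\leftarrow\myvec{x}_{i',j'}$ but not in $\myvec{X}_{i,j}\leftarrow\myvec{x}_{i,j}$ are assignments to non-ancestors of $X$ and therefore cannot change the value of $X$ (cf.~\cite{halpern2016actual}). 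Hence $\csettingint{\myvec{X}_{i',j'}\leftarrow\myvec{x}_{i',j'}} \models X = x$ iff $\csettingint{\myvec{X}_{i,j}\leftarrow\myvec{x}_{i,j}} \models X = x$, which is exactly the structural fact.

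It remains to prove the non-descendant statement, and this is where the agent ranking function is used. Suppose towards a contradiction that the environment variable $X$ with $\rho(X) = i$ is a descendant of some agent variable $Z$ with $\rho(Z) > i$; since $X \neq Z$, this forces the variable level of $X$ to be strictly greater than that of $Z$. Now $i < n$ (otherwise no agent variable of rank $> i$ exists), so some agent variable $A$ has $\rho(A) = i+1$ --- every value in $\{1,\dots,n\}$ is the rank of some agent variable, by definition of $\rho$ and $n$ --- and the environment-variable clause defining $\rho(X) = i$ gives that the variable level of $X$ is at most that of $A$. Since $\rho(Z) \geq i+1 = \rho(A) > 0$, monotonicity of $\rho$ on $V_a$ gives that the variable level of $Z$ is at least that of $A$, hence at least that of $X$, contradicting the strict inequality. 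I expect this last bookkeeping --- threading agent ranks through the case split in the definition of the ranking function and matching them against variable levels --- to be the fiddliest part; the rest is unwinding Definition~\ref{def:evaluations in a causal CGS} together with the cited locality of interventions.
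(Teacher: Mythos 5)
Your proposal is correct and takes essentially the same route as the paper's proof: the new interventions accumulated along any descendant path concern only agent variables of rank greater than $i$, which cannot be ancestors of $X$ and hence cannot change its value. You simply make explicit what the paper leaves implicit (the unfolding of the evaluation definition, the locality of interventions in recursive models, and the rank-versus-variable-level bookkeeping behind the non-ancestor claim); the only nitpick is that the extra assignments have rank in $(i,i']$ rather than "strictly between $i$ and $i'$", which does not affect the argument.
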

\begin{proof}
    Let $(X = x) \in \pi(q_{i,j})$. 
    Variable values can change in a state due to interventions, but the only new interventions done in states descended from $q_{i,j}$ are interventions on variables with an agent rank higher than $i$. 
    $X$ has agent rank $i$, so by the definition of agent rank none of those variables can be ancestors of $X$. They are hence unable to influence the value of $X$. 
    Therefore $(X = x) \in \pi(q_{i',j'})$ for all states $q_{i',j'}$ descended from $q_{i,j}$.

    Now, let $(X=x)\in \pi(q_{i',j'})$ for all states $q_{i',j'}$ that are descended from $q_{i,j}$.
    The value of $X$ was not changed in any of those states, because the value of $X$ can only change due to an intervention on $X$ or an ancestor variable of $X$, so only due to variables of agent rank smaller or equal to $\rho(X)$.
    The only interventions on variables that happen in the descendants of $q_{i,j}$ are on variables of agent rank higher than $\rho(X)$, hence $X$ must have had the same value in $q_{i,j}$, i.e. $(X = x) \in \pi(q_{i,j})$.
\end{proof}

We define the notion of \emph{correspondence} to talk about how states in a causal CGS connect to a causal model.
\begin{definition}[Correspondence]\label{def:correspondence}
    We say that a state $q_{i,j}$ of a causal CGS \emph{corresponds} to a causal setting 
    $(\mathcal{M}^{\myvec{Y} \leftarrow \myvec{y}},\mathbf{u})$, where $\myvec{Y} \subseteq \mathcal{V}$, if for all causal variables $X$ of $\mathcal{M}$,
    $(X = x) \in \pi(q_{i,j})$ if and only if $(\mathcal{M}^{\myvec{Y} \leftarrow \myvec{y}},\mathbf{u}) \models X = x$.\footnote{So the causal variable $X$ has value $x$ in the causal setting $(\mathcal{M}^{\myvec{Y} \leftarrow \myvec{y}},\mathbf{u})$.}
\end{definition}
We will sometimes say that a causal setting $(\mathcal{M}^{\myvec{Y} \leftarrow \myvec{y}},\mathbf{u})$ corresponds to a state $q_{i,j}$ of a causal CGS and mean the same thing.
Note that the set $\myvec{Y}$ could also be empty. Hence the causal model $\mathcal{M}^{\myvec{Y} \leftarrow \myvec{y}}$ in Definition \ref{def:correspondence} could also be $\mathcal{M}$.

We can show that a leaf-state of a causal CGS corresponds to a causal setting $\csettingint{\myvec{Y} \leftarrow \myvec{y}}$, where $\myvec{Y} \leftarrow \myvec{y}$ depends on the action path that leads to the leaf-state. 
This connects the definition of causal CGS to the theory of causal models.
\begin{proposition}\label{prop:state correspondence}
    Let $GS$ be a causal CGS generated by a causal setting $\csetting$. If $q_{n,m}$ is a leaf-state of $GS$, then $q_{n,m}$ corresponds to the causal setting $(\mathcal{M}^{\myvec{Y}\leftarrow \myvec{y}}, \mathbf{u})$, where $\myvec{Y} \leftarrow \myvec{y} = \{A_k \leftarrow a_k \ | \ A_k \in V_a \text{ and } a_k \in \alpha[q_{n,m}]\}$, with $\alpha[q_{n,m}]$ the action path for $q_{n,m}$.
\end{proposition}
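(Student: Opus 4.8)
The proof is by induction on the agent rank index $i$, showing that every state $q_{i,j}$ corresponds to the causal setting $(\mathcal{M}^{\myvec{X}_{i,j} \leftarrow \myvec{x}_{i,j}}, \myvec{u})$, where $\myvec{X}_{i,j} \leftarrow \myvec{x}_{i,j}$ is exactly the accumulated intervention defined in Definition \ref{def:evaluations in a causal CGS}. The leaf-state statement is then the special case $i = n$, once we check that $\myvec{X}_{n,m} \leftarrow \myvec{x}_{n,m}$ coincides with $\{A_k \leftarrow a_k \mid A_k \in V_a \text{ and } a_k \in \alpha[q_{n,m}]\}$.

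\medskip
\noindent
\textbf{Base case.} For $q_{0,0}$ we have $\myvec{X}_{0,0} \leftarrow \myvec{x}_{0,0} = \emptyset$, so $\mathcal{M}^{\myvec{X}_{0,0} \leftarrow \myvec{x}_{0,0}} = \mathcal{M}$, and by the first clause of Definition \ref{def:evaluations in a causal CGS}, $\pi(q_{0,0}) = \{Y = y \mid \csetting \models Y = y\}$, which is precisely correspondence to $(\mathcal{M}, \myvec{u})$.

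\medskip
\noindent
\textbf{Inductive step.} Suppose $q_{i,j}$ corresponds to $(\mathcal{M}^{\myvec{X}_{i,j} \leftarrow \myvec{x}_{i,j}}, \myvec{u})$, and let $q_{i+1,j'} = \delta(q_{i,j}, \myvec{a}_{i,j})$. By the second clause of Definition \ref{def:evaluations in a causal CGS}, $(X = x) \in \pi(q_{i+1,j'})$ iff $(\mathcal{M}^{\myvec{X}_{i,j} \leftarrow \myvec{x}_{i,j},\, \myvec{A}_{i,j} \leftarrow \myvec{a}_{i,j}}, \myvec{u}) \models X = x$; since $\myvec{X}_{i+1,j'} \leftarrow \myvec{x}_{i+1,j'} = (\myvec{X}_{i,j} \leftarrow \myvec{x}_{i,j}) \cup (\myvec{A}_{i,j} \leftarrow \myvec{a}_{i,j})$ by definition, this is exactly correspondence of $q_{i+1,j'}$ to $(\mathcal{M}^{\myvec{X}_{i+1,j'} \leftarrow \myvec{x}_{i+1,j'}}, \myvec{u})$. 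One subtlety to address is that this requires $\myvec{X}_{i,j}$ and $\myvec{A}_{i,j}$ to be disjoint (so that the union is a well-formed intervention): this follows because $\myvec{X}_{i,j}$ collects only variables of agent rank $\leq i$ (an easy sub-induction on the recursive definition), whereas $\myvec{A}_{i,j}$ consists of agent variables of rank exactly $i+1$.

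\medskip
\noindent
\textbf{Identifying the intervention set for a leaf-state.} It remains to argue that for $i = n$, $\myvec{X}_{n,m} \leftarrow \myvec{x}_{n,m}$ equals $\{A_k \leftarrow a_k \mid A_k \in V_a,\ a_k \in \alpha[q_{n,m}]\}$. Unfolding the recursion, $\myvec{X}_{n,m} \leftarrow \myvec{x}_{n,m}$ is the union of all $\myvec{A}_{i',j'} \leftarrow \myvec{a}_{i',j'}$ over the states $q_{i',j'}$ on the history of $q_{n,m}$, and each $\myvec{A}_{i',j'} \leftarrow \myvec{a}_{i',j'}$ consists precisely of the assignments $A_k \leftarrow a_k$ for the non-no-op actions $a_k$ taken in $q_{i',j'}$ (agent variables of rank $i'+1$). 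By the definition of the action path $\alpha[q_{n,m}]$ this is exactly the set of all non-no-op actions taken along the history, so the two intervention sets coincide. The main obstacle, and the part requiring the most care, is bookkeeping: checking that the recursively-accumulated $\myvec{X}_{i,j}$ ranges over exactly the ancestor states on the unique history path (using the tree structure from Definition \ref{def:transitions causal CGS}) and matches the action-path definition, and confirming disjointness of the successive intervention blocks so that all the $\mathcal{M}^{\cdots}$ are legitimate intervened models.
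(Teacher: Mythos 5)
Your proof is correct and takes essentially the same route as the paper's: both simply unfold Definition~\ref{def:evaluations in a causal CGS} to see that the labelling of a leaf-state is given by the accumulated intervention, and then identify that accumulated intervention with the set of non-no-op actions on the action path. Your version is more detailed (explicit induction, the disjointness check on successive intervention blocks), but the underlying argument is the one the paper gives.
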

\begin{proof}
    By Definition \ref{def:evaluations in a causal CGS}, $(X = x) \in \pi(q_{n,m})$ if and only if $\csettingint{\myvec{X}_{i,j}\leftarrow \myvec{x}_{i,j}, \myvec{A} \leftarrow \myvec{a}}\models X = x$, 
    where $\myvec{A} \leftarrow \myvec{a}$ are the actions taken in the state before $q_{n,m}$, and $\myvec{X}_{i,j}\leftarrow \myvec{x}_{i,j}$ are all previously taken actions. Hence $\myvec{Y} \leftarrow \myvec{y} = (\myvec{A} \leftarrow \myvec{a}) \cup \myvec{X}_{i,j}\leftarrow \myvec{x}_{i,j}$ and the proposition is proven.
\end{proof}

This gives us a solid grasp on how a causal CGS relates to the causal model that generates it.
We will use this in the next section when we talk about the connection between agent strategies in a causal CGS and causality in this structural causal model.

\section{Causality in Causal CGS}
\label{sec:causality in CGS}
Now that we have defined causal concurrent game structures and shown what their states represent, it is time to look at how we can use them.
In this section, we will show some relations between causal CGS and the modified HP definition of actual causality, but we first introduce the notion of a causal strategy profile.

From now on, we will denote the set of all agents in a model by $\Sigma$. 
Specifically, for a causal CGS, $\Sigma = \set{k\ | \ X_k \in V_a}$. This set will also be called the \emph{grand coalition} at times.
We will use the notation $F_{X_k = x}$ to denote the strategy for agent $k$ where it takes action $x$ as its non no-op action.
Formally,
\[
    F_{X_k = x}(q_{i,j}) = \left\{ \begin{array}{lll}
        x & \text{if } &  \rho(X_k) = i+1 \\
         0 & \text{else}&
    \end{array}\right.
\]
For a set of agents $\myvec{X}$, we write $F_{\myvec{X} = \myvec{x}}$ to indicate the set of strategies $\{F_{X_k = x} \ | \ $ $ X_k \in \myvec{X}, x \in \myvec{x}\}$.
Let $F_A$ be a strategy for a set of agents $A$, and $F_B$ a strategy for a set of agents $B$. 
Following notation in \cite{Brihaye_DaCosta_Laroussinie_Markey_2008}, we will write $F_A \circ F_B$ to denote a strategy profile for the agents in $A \cup B$ that follows strategy $F_A$ for agents in $A$ and strategy $F_B$ for agents in $B \bs A$.

We define the causal strategy profile as a way to capture the `normal' behaviour of agents when they would follow the causal model.
\begin{definition}[Causal Strategy Profile]\label{def:complete causal strat profile}
Given a causal setting $(\mathcal{M},\mathbf{u})$ and the causal CGS generated by this setting.
Define the \emph{causal strategy profile} $F_\mathcal{M}$ as $F_\mathcal{M} = \set{F_{X_k}\ |\ k \in \Sigma}$, where $F_{X_k}(q_{i,j}) = 0$ if $\rho(X_k) \neq i+1$, and $F_{X_k}(q_{i,j}) = x_k$ otherwise, where $x_k$ is such that $(\mathcal{M},\mathbf{u}) \models [\myvec{X}\leftarrow\myvec{x}] X_k = x_k$, with $\myvec{X} = \set{X_{k'} \ | \ \rho(X_{k'})<\rho(X_k)}$ and $\myvec{x} = \set{x_{k'} \ | \ x_{k'} \in \alpha[q_{i,j}]}$.
\end{definition}
Recall that $\alpha[q_{i,j}]$ is the action path up to state $q_{i,j}$.
If we want an agent $k$ to follow a strategy $F_k$ and the rest of the agents to follow the causal strategy profile, we denote this as $F_k \circ F_\mathcal{M}$.
If a set of agents follows the causal strategy profile, that means that in every state, the agents take the actions that assign those values to the agent variables that they would also have gotten in the causal setting on which the causal CGS is based, given the actions of the other agents.

\begin{example}
    In the semi-automated vehicle example, given the setting where $U_O = 1$ and $U_{Att} = 0$, the causal strategy profile $F_\mathcal{M}$ is such that the human driver does not brake, but the obstacle detection system detects the obstacle.
    The driving assistant brakes in this case, but whenever one of the $HD$ or $ODS$ performs another action, $DA$ does not brake. The causal strategy profile for a causal CGS generated by this causal setting is given in Figure \ref{fig:causal cgs vehicle strategy}.
    \begin{figure}[h]
    \centering
    \setlength{\unitlength}{0.9cm}
    \begin{picture}(7,7.3)(-0.25,0.8)

        \multiput(0.25,4.8)(0.09,0.12){16}{\circle*{0.01}}
        \put(1.7,6.7){\vector(3,4){0}}
        \multiput(0.4,4.6)(0.12,0.03){10}{\circle*{0.01}}
        \put(1.6,4.9){\vector(4,1){0}}
        \multiput(0.4,4.4)(0.12,-0.03){10}{\circle*{0.01}}
        \put(1.6,4.1){\vector(4,-1){0}}
        \put(0.25,4.2){\vector(3,-4){1.45}}
        \put(2.35,7.2){\vector(2,1){1.3}}
        \multiput(2.4,7)(0.12,0){10}{\circle*{0.01}}
        \put(3.6,7){\vector(1,0){0}}
        \put(2.35,5.2){\vector(2,1){1.3}}
        \multiput(2.4,5)(0.12,0){10}{\circle*{0.01}}
        \put(3.6,5){\vector(1,0){0}}
        \multiput(2.4,4)(0.12,0){10}{\circle*{0.01}}
        \put(3.6,4){\vector(1,0){0}}
        \put(2.35,3.8){\vector(2,-1){1.3}}
        \put(2.4,2){\vector(1,0){1.2}}
        \multiput(2.35,1.8)(0.1,-0.05){13}{\circle*{0.01}}
        \put(3.65,1.15){\vector(2,-1){0}}

        \put(0,4.5){\circle{0.8}}
        \put(2,2){\circle{0.8}}
        \put(2,4){\circle{0.8}}
        \put(2,5){\circle{0.8}}
        \put(2,7){\circle{0.8}}
        \put(4,1){\circle{0.8}}
        \put(4,2){\circle{0.8}}
        \put(4,3){\circle{0.8}}
        \put(4,4){\circle{0.8}}
        \put(4,5){\circle{0.8}}
        \put(4,6){\circle{0.8}}
        \put(4,7){\circle{0.8}}
        \put(4,8){\circle{0.8}}
        
        \put(-0.23,4.5){\makebox(0,0)[l]{\footnotesize{$q_{0,0}$}}}
        \put(1.77,7){\makebox(0,0)[l]{\footnotesize{$q_{1,0}$}}}
        \put(1.77,5){\makebox(0,0)[l]{\footnotesize{$q_{1,1}$}}}
        \put(1.77,4){\makebox(0,0)[l]{\footnotesize{$q_{1,2}$}}}
        \put(1.77,2){\makebox(0,0)[l]{\footnotesize{$q_{1,3}$}}}
        \put(3.77,8){\makebox(0,0)[l]{\footnotesize{$q_{2,0}$}}}
        \put(3.77,7){\makebox(0,0)[l]{\footnotesize{$q_{2,1}$}}}
        \put(3.77,6){\makebox(0,0)[l]{\footnotesize{$q_{2,2}$}}}
        \put(3.77,5){\makebox(0,0)[l]{\footnotesize{$q_{2,3}$}}}
        \put(3.77,4){\makebox(0,0)[l]{\footnotesize{$q_{2,4}$}}}
        \put(3.77,3){\makebox(0,0)[l]{\footnotesize{$q_{2,5}$}}}
        \put(3.77,2){\makebox(0,0)[l]{\footnotesize{$q_{2,6}$}}}
        \put(3.77,1){\makebox(0,0)[l]{\footnotesize{$q_{2,7}$}}}

        \put(0.45,5.5){\rotatebox{53}{\footnotesize{$\langle 0, 0, 0\rangle$}}}
        \put(0.4,4.7){\rotatebox{14}{\footnotesize{$\langle 0, 1, 0\rangle$}}}
        \put(0.4,4.13){\rotatebox{-14}{\footnotesize{$\langle 1, 0, 0\rangle$}}}
        \put(0.45,3.4){\rotatebox{-53}{\footnotesize{$\langle 1, 1, 0\rangle$}}}
        \put(2.5,7.4){\rotatebox{26.5}{\footnotesize{$\langle 0, 0, 0\rangle$}}}
        \put(3,7){\makebox(0,0)[b]{\footnotesize{$\langle 0, 0, 1\rangle$}}}
        \put(2.5,5.4){\rotatebox{26.5}{\footnotesize{$\langle 0, 0, 0\rangle$}}}
        \put(3,5){\makebox(0,0)[b]{\footnotesize{$\langle 0, 0, 1\rangle$}}}
        \put(3,3.95){\makebox(0,0)[t]{\footnotesize{$\langle 0, 0, 0\rangle$}}}
        \put(2.5,3.45){\rotatebox{-26.5}{\footnotesize{$\langle 0, 0, 1\rangle$}}}
        \put(3,1.95){\makebox(0,0)[t]{\footnotesize{$\langle 0, 0, 0\rangle$}}}
        \put(2.5,1.45){\rotatebox{-26.5}{\footnotesize{$\langle 0, 0, 1\rangle$}}}

        \put(-0.5,4.5){\makebox(0,0)[r]{\scriptsize{$\set{O,\neg Att}$}}}
        \put(1.7,7.5){\makebox(0,0)[b]{\scriptsize{$\set{\neg HD, \neg ODS}$}}}
        \put(1.73,5.5){\makebox(0,0)[b]{\scriptsize{$\set{\neg HD, ODS}$}}}
        \put(1.73,3.5){\makebox(0,0)[t]{\scriptsize{$\set{ HD, \neg ODS}$}}}
        \put(1.73,1.5){\makebox(0,0)[t]{\scriptsize{$\set{ HD, ODS}$}}}
        
        \put(4.43,8){\makebox(0,0)[l]{\scriptsize{$\set{O,\neg Att, \neg HD, \neg ODS, \neg DA, \neg Col}$}}}
        \put(4.43,7){\makebox(0,0)[l]{\scriptsize{$\set{O,\neg Att, \neg HD, \neg ODS, DA, \neg Col}$}}}
        \put(4.43,6){\makebox(0,0)[l]{\scriptsize{$\set{O,\neg Att, \neg HD, ODS, \neg DA, \neg Col}$}}}
        \put(4.43,5){\makebox(0,0)[l]{\scriptsize{$\set{O,\neg Att, \neg HD, ODS, DA, \neg Col}$}}}
        \put(4.43,4){\makebox(0,0)[l]{\scriptsize{$\set{O,\neg Att, HD, \neg ODS, \neg DA, \neg Col}$}}}
        \put(4.43,3){\makebox(0,0)[l]{\scriptsize{$\set{O,\neg Att,  HD, \neg ODS, DA, Col}$}}}
        \put(4.43,2){\makebox(0,0)[l]{\scriptsize{$\set{O,\neg Att, HD, ODS, \neg DA, \neg Col}$}}}
        \put(4.43,1){\makebox(0,0)[l]{\scriptsize{$\set{O,\neg Att,  HD, ODS, DA, Col}$}}}
        
    \end{picture}
    \caption{The causal CGS of the semi-automated vehicle example. The dotted lines indicate actions that are not following the causal strategy profile.}
    \label{fig:causal cgs vehicle strategy}
    \Description{The same graph for the causal concurrent game structure of the semi-automated vehicle example as before, with the difference that this picture has drawn some edges with dotted lines. These edges are the edges from q subscript 0,0 to q subscript 1,0 up until 1,2, and the edges from q subscript 1,0 to q subscript 2,1, from q subscript 1,1 to q subscript 2,3, from q subscript 1,2 to q subscript 2,4, and from q subscript 1,3 to q subscript 2,7.}
\end{figure}
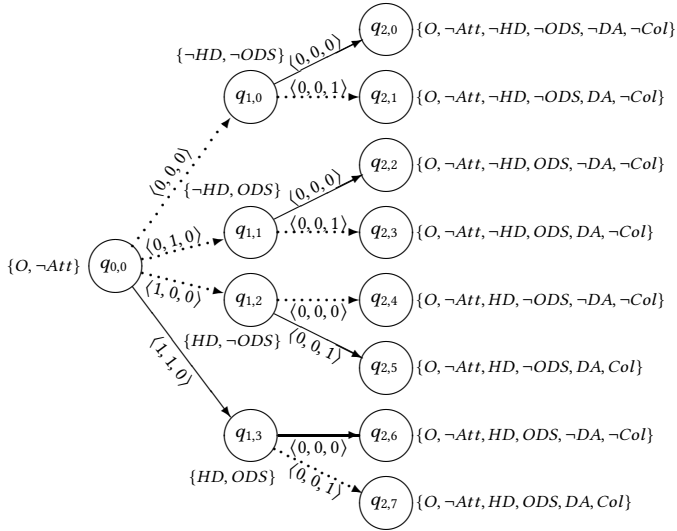
\end{example}

In the following lemma, we relate deviations from the causal strategy profile to interventions in the structural causal model that generated the causal CGS. This can be used to relate agent strategies in the causal CGS to causality in the causal model.
\begin{lemma}\label{lem:causal strat leads to causal setting}
    Let $GS$ be a causal CGS based on a causal setting $\csetting$. If $q_{n,m}$ is the leaf-state of $GS$ that results from the strategy profile $F_{\myvec{X} = \myvec{x}} \circ F_\mathcal{M}$, then $q_{n,m}$ corresponds to $(\mathcal{M}^{\myvec{X} \leftarrow \myvec{x}},\mathbf{u})$.
\end{lemma}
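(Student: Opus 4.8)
The plan is to combine Proposition~\ref{prop:state correspondence} with an analysis of which action path the profile $F_{\myvec{X}=\myvec{x}} \circ F_\mathcal{M}$ actually produces. By Proposition~\ref{prop:state correspondence}, the leaf-state $q_{n,m}$ reached by any run corresponds to the causal setting $(\mathcal{M}^{\myvec{Y}\leftarrow\myvec{y}},\mathbf{u})$ where $\myvec{Y}\leftarrow\myvec{y} = \{A_k\leftarrow a_k \mid A_k\in V_a,\ a_k\in\alpha[q_{n,m}]\}$, with $\alpha[q_{n,m}]$ the action path. So it suffices to show that when the agents follow $F_{\myvec{X}=\myvec{x}}\circ F_\mathcal{M}$, the resulting action path assigns exactly $\myvec{X}\leftarrow\myvec{x}$ on the agents in $\myvec{X}$, and assigns to every agent $X_k\notin\myvec{X}$ the value $x_k$ it would have taken in $(\mathcal{M}^{\myvec{X}\leftarrow\myvec{x}},\mathbf{u})$ — i.e. that the superfluous interventions coming from the causal-strategy agents are vacuous, so $(\mathcal{M}^{\myvec{Y}\leftarrow\myvec{y}},\mathbf{u})$ and $(\mathcal{M}^{\myvec{X}\leftarrow\myvec{x}},\mathbf{u})$ agree on all variables.

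I would proceed by induction on the agent rank $i$, showing the following invariant: for each $i$, the prefix of the action path consisting of actions of agents of rank $\le i$ equals $\{X_k\leftarrow x_k \mid \rho(X_k)\le i\}$ where $x_k = x_k$ if $X_k\in\myvec{X}$, and otherwise $x_k$ is the value of $X_k$ in $(\mathcal{M}^{\myvec{X}\leftarrow\myvec{x}},\mathbf{u})$; and moreover, that $\pi$ of the state reached after rank-$i$ actions records exactly the values of $(\mathcal{M}^{\myvec{X}\leftarrow\myvec{x}},\mathbf{u})$ restricted to variables of agent rank $\le i$ (this last part uses Lemma~\ref{lem:no change after i} to know those values are already ``frozen''). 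For the inductive step at rank $i{+}1$: agents in $\myvec{X}$ of rank $i{+}1$ play their prescribed $x_k$ by definition of $F_{\myvec{X}=\myvec{x}}$; agents not in $\myvec{X}$ of rank $i{+}1$ play, by Definition~\ref{def:complete causal strat profile}, the value $x_k$ with $(\mathcal{M},\mathbf{u})\models[\myvec{Z}\leftarrow\myvec{z}]X_k = x_k$ where $\myvec{Z}\leftarrow\myvec{z}$ is the action path so far, which by the induction hypothesis is $\{X_{k'}\leftarrow x_{k'}\mid \rho(X_{k'})<\rho(X_k)\}$ with the values as above. Since in a (strongly) recursive model the value of $X_k$ is determined by its ancestors, and all agent-variable ancestors of $X_k$ have strictly smaller rank and have been intervened to exactly their $(\mathcal{M}^{\myvec{X}\leftarrow\myvec{x}},\mathbf{u})$-values (environment ancestors take those same values by the induction hypothesis on $\pi$), we get $[\myvec{Z}\leftarrow\myvec{z}]X_k = x_k$ holds in $(\mathcal{M},\mathbf{u})$ iff $X_k = x_k$ in $(\mathcal{M}^{\myvec{X}\leftarrow\myvec{x}},\mathbf{u})$. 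Then Definition~\ref{def:evaluations in a causal CGS} tells us $\pi$ of the successor state is obtained by adding these rank-$(i{+}1)$ interventions to the accumulated ones, which closes the induction.

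The main obstacle is the bookkeeping argument that the ``extra'' interventions — those that $F_\mathcal{M}$ imposes on agents outside $\myvec{X}$ — are redundant, i.e. that intervening a variable to the value it already takes does not change anything downstream. This needs the strong recursiveness assumption (so that ancestor values fully pin down each variable) together with care that the accumulated intervention set in Definition~\ref{def:evaluations in a causal CGS} contains, for each agent variable, precisely its $(\mathcal{M}^{\myvec{X}\leftarrow\myvec{x}},\mathbf{u})$-value; once that is established, a standard fact about structural equations (replacing the equation for $X_k$ by $X_k := x_k$ when $x_k$ is already the solved value yields the same solution) finishes it. A secondary subtlety is matching the recursive definition of $\myvec{X}_{i,j}\leftarrow\myvec{x}_{i,j}$ in Definition~\ref{def:evaluations in a causal CGS} with the action path $\alpha[q_{n,m}]$ of Proposition~\ref{prop:state correspondence}; but this identification is exactly what Proposition~\ref{prop:state correspondence}'s proof already records, so I would simply invoke it rather than re-derive it.
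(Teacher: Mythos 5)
Your proposal is correct and follows essentially the same route as the paper: the paper likewise proves the correspondence $(X=x)\in\pi(q_{n,m}) \Leftrightarrow (\mathcal{M}^{\myvec{X}\leftarrow\myvec{x}},\mathbf{u})\models X=x$ by induction on agent rank, splitting into the cases $X\in\myvec{X}$, $X\in V_a\setminus\myvec{X}$ (where the $F_\mathcal{M}$-prescribed value coincides with the value in $(\mathcal{M}^{\myvec{X}\leftarrow\myvec{x}},\mathbf{u})$ because all lower-rank values already agree), and $X\in V_e$. The only difference is cosmetic: you first route through Proposition~\ref{prop:state correspondence} to reduce the claim to an agreement between two causal settings, whereas the paper establishes the correspondence to $(\mathcal{M}^{\myvec{X}\leftarrow\myvec{x}},\mathbf{u})$ directly.
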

\begin{proof}
    We are going to prove the correspondence, i.e. $(X = x) \in \pi(q_{n,m}) \Leftrightarrow (\mathcal{M}^{\myvec{X} \leftarrow \myvec{x}}, \mathbf{u}) \models X = x$ by induction on the agent rank of $X$.
    
    \noindent \textbf{Base Step:} If $\rho(X) = 0$, $X \in V_e$ and does not depend on any other endogenous variables, so if $(X = x)\in \pi(q_{n,m})$, Lemma \ref{lem:no change after i} 
    implies that $(\mathcal{M}, \mathbf{u}) \models X = x$. Because $X$ does not depend on any agent variables, it will keep the same value when intervening on agent variables $\myvec{X}$, so $(\mathcal{M}^{\myvec{X}\leftarrow\myvec{x}},\mathbf{u}) \models X = x$ as well. For the other way around, if $\csettingint{\myvec{X}\leftarrow\myvec{x}}\models X = x$, by the same reasoning we have that $\csetting \models X = x$ and hence $(X = x) \in \pi(q_{0,0})$, again by Lemma \ref{lem:no change after i} 
    $(X = x) \in \pi(q_{n,m})$ as well.

    \noindent\textbf{Induction Hypothesis:} Suppose that for all $X \in \mathcal{V}$ s.t. $\rho(X) \leq i$, $(X = x) \in \pi(q_{n,m})$ if and only if $\csettingint{\myvec{X} \leftarrow \myvec{x}} \models X = x$.

    \noindent\textbf{Inductive Step:} Let $X$ be such that $\rho(X) = i +1$. First suppose that $X \in V_a$.\newline
     - If $X \in \myvec{X}$, $X$ gets value $x \in \myvec{x}$ in $q_{n,m}$, if and only if $\csettingint{\myvec{X} \leftarrow \myvec{x}}$, because it gets the value directly from the intervention $\myvec{X} \leftarrow \myvec{x}$. So it is true in this case.\newline
    - If $X \notin \myvec{X}$, let $(X = x) \in \pi(q_{n,m})$, the value $x$ was determined by $F_\mathcal{M}$, in particular, $\csetting \models [\myvec{Y} \leftarrow \myvec{y}] X = x$, where $\myvec{Y} = \set{Y \ | \ Y \in V_a, \rho(Y) < \rho(X)}$ and $\myvec{y} = \set{y \ | \ y \in \alpha[q_{i,j}]}$, where $q_{i,j}$ is the state on the path to $q_{n,m}$ where $X$ got to take an action. 
    By the inductive hypothesis we know that $\csettingint{\myvec{X} \leftarrow\myvec{x}} \models \myvec{Y} = \myvec{y}$, and hence $\csettingint{\myvec{X} \leftarrow\myvec{x}} \models X = x$ as well, because all variables $X$ depends on have the same values in $\pi(q_{n,m})$ as in $\csettingint{\myvec{X} \leftarrow \myvec{x}}$.
    On the other hand, if $\csettingint{\myvec{X} \leftarrow \myvec{x}} \models X = x$, we know that $x$ is determined only by variables with a lower agent rank, by the inductive hypothesis all those are in $\pi(q_{n,m})$. The value of $X$ in $\pi(q_{n,m})$ is determined by $F_\mathcal{M}$, so $\csetting \models [\myvec{Y} \leftarrow \myvec{y}] X = x'$. 
    \textcolor{black}{All variable-value pairs $Y,y$ are the variable-value pairs from $\csettingint{\myvec{X} \leftarrow \myvec{x}}$, so $x'$ must be $x$, as all variables of lower agent rank have the same value.}\newline
    Now suppose $X \in V_e$, and let $(X = x) \in \pi(q_{n,m})$. $X$ depends only on variables of a lower level, specifically all agent variables of agent rank less or equal than $i + 1$. By the above and the inductive hypothesis, we know that all those variables have the same value in $\csettingint{\myvec{X} \leftarrow \myvec{x}}$, as in $\pi(q_{n,m})$, hence $X = x$ must also be induced by $\csettingint{\myvec{X} \leftarrow \myvec{x}}$, since there are no other interventions done after the agent variables of rank $i+1$ got their final value.
    Now suppose $\csettingint{\myvec{X} \leftarrow \myvec{x}} \models X = x$, $X$ depends only on variables of lower levels, all of those agent variables have the same value in $\csettingint{\myvec{X} \leftarrow \myvec{x}} \models X = x$ as in $\pi(q_{n,m})$ by the inductive hypothesis. Hence $(X = x) \in \pi(q_{n,m})$ as well, since \textcolor{black}{the environment variables follow the causal model in both $\csettingint{\myvec{X} \leftarrow \myvec{x}}$ as in $\pi(q_{n,m})$.}
\end{proof}

The following corollary follows directly from this lemma, it shows that there is a leaf-state in a causal CGS that corresponds to the original causal setting.

\begin{corollary}\label{col:csetting reachable}
    Let $GS$ be a causal CGS based on a causal setting $\csetting$. If $q_{n,m}$ is the leaf-state resulting from all agents following the causal strategy profile $F_{\mathcal{M}}$, then $q_{n,m}$ corresponds to $(\mathcal{M},\mathbf{u})$.
\end{corollary}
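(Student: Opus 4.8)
The plan is to obtain the corollary as the degenerate case $\myvec{X} = \emptyset$ of Lemma~\ref{lem:causal strat leads to causal setting}. First I would unwind the notation: $F_{\myvec{X} = \myvec{x}}$ with $\myvec{X} = \emptyset$ is the empty set of strategies, and by the definition of the $\circ$ operator, $\emptyset \circ F_\mathcal{M}$ is just $F_\mathcal{M}$, since every agent of the grand coalition $\Sigma$ lies in $\Sigma \setminus \emptyset$ and therefore follows $F_\mathcal{M}$. I would also note that the statement is well posed: by Definition~\ref{def:transitions causal CGS} the causal CGS is a finite tree, and a strategy profile is deterministic, so following $F_\mathcal{M}$ from the starting state $q_{0,0}$ traces a unique path down to a unique leaf-state; this is the $q_{n,m}$ in the statement.

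With this in place, I would simply apply Lemma~\ref{lem:causal strat leads to causal setting} to the profile $F_{\emptyset = \emptyset} \circ F_\mathcal{M} = F_\mathcal{M}$: the lemma yields that $q_{n,m}$ corresponds to $(\mathcal{M}^{\emptyset \leftarrow \emptyset}, \mathbf{u})$. The final step is to observe that an intervention on the empty set of variables does not alter any structural equation, so $\mathcal{M}^{\emptyset \leftarrow \emptyset}$ is literally $\mathcal{M}$, and hence $q_{n,m}$ corresponds to $\csetting$; spelling out Definition~\ref{def:correspondence}, this is the assertion that $(X = x) \in \pi(q_{n,m})$ if and only if $\csetting \models X = x$ for every causal variable $X$.

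There is essentially no hard step here --- the corollary is a one-line instantiation --- so the only thing to be careful about is the bookkeeping around empty sets (that $F_{\emptyset=\emptyset}$ really is empty, that $\circ$ collapses correctly, and that $\mathcal{M}^{\emptyset\leftarrow\emptyset}=\mathcal{M}$). Should a self-contained argument be preferred over citing the lemma, one could instead replay its induction on agent rank specialised to $\myvec{X} = \emptyset$: at rank $0$ the environment variables take their values directly from $\mathbf{u}$ and are frozen thereafter by Lemma~\ref{lem:no change after i}; at each higher rank the agent variables receive exactly the values $F_\mathcal{M}$ prescribes, which by Definition~\ref{def:complete causal strat profile} are the values forced by $\csetting$ under the intervention given by the action path so far, and the environment variables at that rank obey the structural equations both in $\pi$ and in the causal setting. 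But invoking the lemma directly is the clean route.
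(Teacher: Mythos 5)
Your proposal is correct and matches the paper's own proof, which likewise just instantiates Lemma~\ref{lem:causal strat leads to causal setting} with $\myvec{X} = \emptyset$. The extra bookkeeping you supply about empty strategy sets, the collapse of $\circ$, and $\mathcal{M}^{\emptyset\leftarrow\emptyset}=\mathcal{M}$ is fine but beyond what the paper records.
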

\begin{proof}
    This is a special case of Lemma \ref{lem:causal strat leads to causal setting}, where $\myvec{X} = \emptyset$.
\end{proof}

We can check whether this result holds in our semi-automated vehicle example.
We see in Figure \ref{fig:causal cgs vehicle strategy} that if all agents follow the causal strategy profile, they end up in state $q_{2,6}$ with $\pi(q_{2,6}) = \set{O, \neg Att, HD, ODS, \neg DA, \neg Col}$.
The causal CGS was based on the causal setting where there is an obstacle on the road and the driver is not paying attention, in this case we have $\csetting \models O, \neg Att, HD, ODS, \neg DA, \neg Col$ which does correspond to state $q_{2,6}$, as Corollary \ref{col:csetting reachable} predicted.

With Lemma \ref{lem:causal strat leads to causal setting} we can show that if a set of agents $\myvec{X}$ causes $\vphi$ according to the modified HP definition, with a given witness, then in the causal CGS generated by the causal setting that holds this witness fixed, these agents have a strategy to guarantee $\neg \varphi$ in a leaf-state, provided that all other agents follow the causal strategy profile and vice versa.

\begin{proposition}\label{prop:cause iff strat}
    Let $\Gamma = \set{k \ | \ X_k \in \myvec{X}}$ be a set of agents, $\myvec{x}$ a setting for the variables in $\myvec{X}$, and let $\csetting$ be a causal setting with $\csetting\models \vphi$. 
    $\myvec{X} = \myvec{x}$ is, according to the modified HP definition, a cause of causal formula $\varphi$ in this causal setting $\csetting$, with witness $\myvec{W} = \myvec{w}^*$ if and only if in the causal CGS generated by the causal setting, $\csettingint{\myvec{W} \leftarrow \myvec{w}^*}$, $\Gamma$ has a strategy $F_\Gamma$ such that, $\neg \varphi$ will hold in the leaf-state $q_{n,m}$ resulting from the strategy profile $F_\Gamma \circ F_\mathcal{M}$.
\end{proposition}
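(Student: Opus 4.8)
The plan is to derive both directions of the equivalence from Lemma~\ref{lem:causal strat leads to causal setting}, applied to the causal CGS generated by $\csettingint{\myvec{W} \leftarrow \myvec{w}^*}$ rather than by $\csetting$ itself. Throughout, $F_\mathcal{M}$ denotes the causal strategy profile of \emph{that} CGS, i.e.\ the one determined by its generating setting $\csettingint{\myvec{W} \leftarrow \myvec{w}^*}$, and I take $\myvec{w}^*$ to be the actual value of $\myvec{W}$ in $\csetting$ (otherwise the conditional in AC2 is vacuous and a ``witness $\myvec{W}=\myvec{w}^*$'' carries no content). The single fact used repeatedly is: for any setting $\myvec{x}'$ of $\myvec{X}$, Lemma~\ref{lem:causal strat leads to causal setting} says the leaf-state reached by $F_{\myvec{X}=\myvec{x}'}\circ F_\mathcal{M}$ corresponds to $((\mathcal{M}^{\myvec{W}\leftarrow\myvec{w}^*})^{\myvec{X}\leftarrow\myvec{x}'},\mathbf{u})$, which, since $\myvec{X}$ and $\myvec{W}$ are disjoint (the intervention $[\myvec{X}\leftarrow\myvec{x}',\myvec{W}\leftarrow\myvec{w}^*]$ in AC2 is a well-formed causal formula), coincides with $\csettingint{\myvec{W}\leftarrow\myvec{w}^*,\myvec{X}\leftarrow\myvec{x}'}$; and by Definition~\ref{def:correspondence} this means $\neg\varphi$ holds at that leaf-state exactly when $\csetting\models[\myvec{X}\leftarrow\myvec{x}',\myvec{W}\leftarrow\myvec{w}^*]\neg\varphi$.

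For the left-to-right direction I would assume $\myvec{X}=\myvec{x}$ is a cause of $\varphi$ with witness $\myvec{W}=\myvec{w}^*$. Condition AC2 hands us a setting $\myvec{x}'$ of $\myvec{X}$ with $\csetting\models[\myvec{X}\leftarrow\myvec{x}',\myvec{W}\leftarrow\myvec{w}^*]\neg\varphi$. Taking $F_\Gamma := F_{\myvec{X}=\myvec{x}'}$ and invoking the fact above immediately gives that $\neg\varphi$ holds in the leaf-state $q_{n,m}$ reached by $F_\Gamma\circ F_\mathcal{M}$, as required.

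For the right-to-left direction the extra work is to collapse an arbitrary, possibly history-dependent, coalition strategy $F_\Gamma$ to a fixed setting of $\myvec{X}$. Since $F_\Gamma\circ F_\mathcal{M}$ is a complete profile and a causal CGS is a finite deterministic tree, this profile induces a unique play $q_{0,0},q_{1,j_1},\dots,q_{n,m}$. For each $k\in\Gamma$, let $x'_k$ be the move that $F_\Gamma$ assigns to agent $k$ at the unique state $q_{i,j}$ on this play with $\rho(X_k)=i+1$ (the one point where that agent has a genuine choice), and put $\myvec{x}'=\langle x'_k\rangle_{k\in\Gamma}$. A short induction on depth then shows $F_{\myvec{X}=\myvec{x}'}\circ F_\mathcal{M}$ induces the \emph{same} play: at each state the $\Gamma$-agents play $x'_k$ under both profiles (under $F_\Gamma$ because the history reached so far agrees, under $F_{\myvec{X}=\myvec{x}'}$ by construction), and every other agent plays its $F_\mathcal{M}$-move, which is a function of that common history only. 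Hence $F_{\myvec{X}=\myvec{x}'}\circ F_\mathcal{M}$ also ends in $q_{n,m}$, $\neg\varphi$ holds there, and the fact above yields $\csetting\models[\myvec{X}\leftarrow\myvec{x}',\myvec{W}\leftarrow\myvec{w}^*]\neg\varphi$, i.e.\ AC2 with witness $\myvec{W}=\myvec{w}^*$ and off-setting $\myvec{x}'$. Condition AC1 is immediate: $\csetting\models\varphi$ is assumed and $\csetting\models\myvec{X}=\myvec{x}$ holds since $\myvec{x}$ is the actual value of $\myvec{X}$. Minimality (AC3) concerns strict subcoalitions rather than $\Gamma$ itself, so it is not visible in a single strategy for $\Gamma$; it has to be supplied separately, either by assuming $\myvec{X}$ minimal or by re-applying the equivalence to each strict subset with each candidate witness.

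I expect the right-to-left passage to be the main obstacle: turning a history-dependent coalition strategy into a positional intervention $\myvec{X}\leftarrow\myvec{x}'$ that Lemma~\ref{lem:causal strat leads to causal setting} can consume, and verifying that the two strategy profiles realise the same play, is where the real bookkeeping lies---together with the care needed to fold the witness intervention $\myvec{W}\leftarrow\myvec{w}^*$ (baked into the generating setting of the CGS) with $\myvec{X}\leftarrow\myvec{x}'$. The AC2 content is otherwise a direct corollary of the correspondence results already in hand, and AC1/AC3 are routine modulo the minimality caveat above.
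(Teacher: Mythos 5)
Your proof takes essentially the same route as the paper's: both directions are read off from Lemma~\ref{lem:causal strat leads to causal setting} applied to the CGS generated by $\csettingint{\myvec{W}\leftarrow\myvec{w}^*}$, with the off-setting $\myvec{x}'$ supplied by AC2 in one direction and, in the other, extracted from the play induced by $F_\Gamma\circ F_\mathcal{M}$ (the paper reads it off the leaf-state valuation via $\myvec{X}=\myvec{x}'\subseteq\pi(q_{n,m})$, which amounts to the same thing). If anything you are more careful than the paper: your explicit reduction of an arbitrary history-dependent $F_\Gamma$ to the positional profile $F_{\myvec{X}=\myvec{x}'}$ --- which is the only form of profile Lemma~\ref{lem:causal strat leads to causal setting} actually consumes --- is left implicit in the paper's proof, and your caveat that AC3 is not delivered by this argument and must be checked separately flags a step the paper's own right-to-left direction silently omits (it verifies only AC1 and AC2 before concluding causehood).
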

\begin{proof}
    We first prove the cause to strategy direction.
    In this case, $\myvec{X} = \myvec{x}$ is a cause of $\varphi$, with witness $\myvec{W} = \myvec{w}^*$ so there exists an alternative value for $\myvec{X}$, $\myvec{x}'$ such that $\csetting \models [\myvec{X} \leftarrow \myvec{x}', \myvec{W} \leftarrow \myvec{w}^*] \neg \varphi$.
    Let $F_\Gamma = \{F_{X_k = x} \ | \ x \in \myvec{x}' \text{ if } X_k \in \myvec{X}\}$. 
    By Lemma \ref{lem:causal strat leads to causal setting}, the leaf-state $q_{n,m}$ corresponds to $\csettingint{\myvec{X} \leftarrow \myvec{x}',\myvec{W}\leftarrow \myvec{w}^*}$, and we have that $\csettingint{\myvec{X} \leftarrow \myvec{x}', \myvec{W}\leftarrow\myvec{w}^*}\models\neg \vphi$ and hence $\neg \vphi$ holds in $q_{n,m}$.
    
    Now for the other direction, let $F_\Gamma$ be the strategy such that $\neg \vphi$ will hold in the leaf-state $q_{n,m}$ that results from the strategy profile $F_\Gamma \circ F_{\mathcal{M}}$ in the causal CGS generated by the causal setting $\csettingint{\mathbf{W}\leftarrow\mathbf{w}^*}$.
    Let $\myvec{x}$ be such that $\csetting \models \myvec{X} = \myvec{x}$ and let $\myvec{x}'$ be such that $\myvec{X} = \myvec{x}'\subseteq \pi(q_{n,m})$. By Lemma \ref{lem:causal strat leads to causal setting}, $q_{n,m}$ must correspond to $\csettingint{\myvec{X}\leftarrow\myvec{x}', \myvec{W} \leftarrow \myvec{w}^*}$. Hence $\csetting \models [\myvec{X} \leftarrow \myvec{x}', \myvec{W} \leftarrow \myvec{w}^*]\neg \vphi$ and by definition we have that $\csetting \models \myvec{X} = \myvec{x} \wedge \vphi$. Moreover, $\myvec{x} \neq \myvec{x}'$, because if they were the same it would not be the case that setting $\myvec{X}$ to $\myvec{x}'$ would give a different result than the original causal setting (the determinism axiom of the causal reasoning axioms \cite{halpern2016actual}). Hence $\myvec{X} = \myvec{x}$ is a cause of $\vphi$ according to the modified HP definition, with witness $\myvec{W} = \myvec{w}^*$.
\end{proof}

This result cannot be used to find causes in a causal CGS, because one would already need to know the witness. 
However, we have another result for the causal setting where the witness was not held fixed, provided the witness consists of only agent variables.
The following proposition states that in that case, the set of agents consisting of both the cause and the witness variables has a strategy to guarantee $\neg \varphi$ in a leaf-state, provided all other agents follow the causal strategy profile and vice versa.

\begin{proposition}\label{prop:cause iff superset strat}
    Let $\Gamma = \set{k \ | \ X_k \in \myvec{X}\cup \myvec{W}, \text{ and } \myvec{X} \cup \myvec{W} \subseteq V_a}$ be a set of agents, $\myvec{x},\myvec{w^*}$ are settings for the variables in $\myvec{X},\myvec{W}$ respectively, and let $\csetting$ be a causal setting with $\csetting\models \vphi$. $\myvec{X} = \myvec{x}$ is, according to the modified HP definition, a cause of causal formula $\varphi$ in this causal setting $\csetting$, with witness $\myvec{W} = \myvec{w}^*$ if and only if in the causal CGS generated by this causal setting, $\Gamma$ has a strategy $F_\Gamma$ such that, $\neg \varphi$ will hold in the leaf-state $q_{n,m}$ resulting from the strategy profile $F_\Gamma \circ F_\mathcal{M}$.
\end{proposition}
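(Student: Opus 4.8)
The plan is to establish the biconditional by two separate arguments, in both of which the workhorse is Lemma~\ref{lem:causal strat leads to causal setting} (deviations from $F_\mathcal{M}$ correspond to interventions), and in which the cause-to-strategy direction is essentially the argument of Proposition~\ref{prop:cause iff strat} with the witness handled differently.

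For the direction from cause to strategy, I would start from the witness: if $\myvec{X} = \myvec{x}$ is a cause of $\vphi$ in $\csetting$ with witness $\myvec{W} = \myvec{w}^*$, then the modified-HP definition supplies an alternative setting $\myvec{x}'$ of $\myvec{X}$ with $\csetting \models [\myvec{X} \leftarrow \myvec{x}', \myvec{W} \leftarrow \myvec{w}^*]\neg\vphi$, and it also fixes $\myvec{w}^*$ to be the value $\myvec{W}$ actually takes in $\csetting$. I would then take the profile $F_\Gamma = F_{\myvec{X} = \myvec{x}'} \circ F_{\myvec{W} = \myvec{w}^*}$: the cause-agents commit to $\myvec{x}'$ and the witness-agents commit to their own actual values $\myvec{w}^*$. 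Since this is a constant strategy for the whole set $\myvec{X}\cup\myvec{W}$, Lemma~\ref{lem:causal strat leads to causal setting} applies directly, so the leaf-state $q_{n,m}$ reached by $F_\Gamma \circ F_\mathcal{M}$ corresponds to $\csettingint{\myvec{X} \leftarrow \myvec{x}', \myvec{W} \leftarrow \myvec{w}^*}$; as that causal setting models $\neg\vphi$ and correspondence on primitive events lifts to arbitrary Boolean combinations, $\neg\vphi$ holds in $q_{n,m}$. The only conceptual difference from Proposition~\ref{prop:cause iff strat} is that there the witness is frozen inside the base model $\mathcal{M}^{\myvec{W}\leftarrow\myvec{w}^*}$, whereas here the witness-agents reproduce $\myvec{w}^*$ through their strategy; Lemma~\ref{lem:causal strat leads to causal setting} makes these coincide.

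For the converse, suppose $\Gamma$ has a strategy $F_\Gamma$ with $\neg\vphi$ at the leaf-state of $F_\Gamma \circ F_\mathcal{M}$, and let $\myvec{x}$ be the value of $\myvec{X}$ in $\csetting$ and $\myvec{w}^*$ that of $\myvec{W}$. Because every agent outside $\Gamma$ follows $F_\mathcal{M}$, the induced play is a single root-to-leaf branch on which each $\Gamma$-agent moves exactly once, so $F_\Gamma$ agrees along this branch with a constant profile $F_{(\myvec{X}\cup\myvec{W}) = (\myvec{x}',\myvec{w}')}$ for some settings $\myvec{x}'$ of $\myvec{X}$ and $\myvec{w}'$ of $\myvec{W}$, and replacing $F_\Gamma$ by this profile leaves $q_{n,m}$ unchanged; Lemma~\ref{lem:causal strat leads to causal setting} then yields $\csetting \models [\myvec{X} \leftarrow \myvec{x}', \myvec{W} \leftarrow \myvec{w}']\neg\vphi$. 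It remains to verify the modified-HP conditions for $\myvec{X} = \myvec{x}$ with witness $\myvec{W} = \myvec{w}^*$: AC1 holds since $\csetting\models\myvec{X}=\myvec{x}$ by the choice of $\myvec{x}$ and $\csetting \models \vphi$ by hypothesis, AC3 is the minimality already granted to $\myvec{X}$, and AC2 has to be recovered from the displayed counterfactual. A clean way to close this is to note that, for the fixed $\myvec{x}'$, the leaf-state of $F_{\myvec{X}=\myvec{x}'}\circ F_\mathcal{M}$ in the causal CGS generated by $\csettingint{\myvec{W}\leftarrow\myvec{w}^*}$ carries the same valuation as the leaf-state of $F_{\myvec{X}=\myvec{x}'}\circ F_{\myvec{W}=\myvec{w}^*}\circ F_\mathcal{M}$ in the causal CGS generated by $\csetting$ --- both correspond to $\csettingint{\myvec{X}\leftarrow\myvec{x}',\myvec{W}\leftarrow\myvec{w}^*}$, using $(\mathcal{M}^{\myvec{W}\leftarrow\myvec{w}^*})^{\myvec{X}\leftarrow\myvec{x}'} = \mathcal{M}^{\myvec{X}\leftarrow\myvec{x}',\myvec{W}\leftarrow\myvec{w}^*}$ --- so the converse can be routed back through Proposition~\ref{prop:cause iff strat}.

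The step I expect to be the real obstacle sits in this last manoeuvre: a modified-HP witness must be pinned at its \emph{actual} value $\myvec{w}^*$, while a strategy of $\Gamma$ is free to push the witness-agents to other values $\myvec{w}'$. The crux is therefore to argue that whenever $\Gamma$ can force $\neg\vphi$ it can do so with the witness-agents playing $\myvec{w}^*$ --- equivalently, that any witness moves off $\myvec{w}^*$ employed by the strategy can be absorbed into the cause side without disturbing the given split into $\myvec{X}$ and $\myvec{W}$ --- and to keep the AC3 bookkeeping consistent with that rearrangement. The remaining ingredients (the two appeals to Lemma~\ref{lem:causal strat leads to causal setting}, the collapse of a general $F_\Gamma$ to a constant profile, and the model identity bridging to Proposition~\ref{prop:cause iff strat}) are routine.
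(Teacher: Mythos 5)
Your forward direction is exactly the paper's: let the cause-agents play $\myvec{x}'$ and the witness-agents play $\myvec{w}^*$, invoke Lemma~\ref{lem:causal strat leads to causal setting} to identify the resulting leaf-state with $\csettingint{\myvec{X}\leftarrow\myvec{x}',\myvec{W}\leftarrow\myvec{w}^*}$, and transfer $\neg\vphi$. Your backward direction also follows the paper's route --- read $\myvec{x}'$ off $\pi(q_{n,m})$, apply Lemma~\ref{lem:causal strat leads to causal setting}, and use the determinism axiom to conclude $\myvec{x}\neq\myvec{x}'$ --- except that you additionally try to route it back through Proposition~\ref{prop:cause iff strat} via the identity $(\mathcal{M}^{\myvec{W}\leftarrow\myvec{w}^*})^{\myvec{X}\leftarrow\myvec{x}'}=\mathcal{M}^{\myvec{X}\leftarrow\myvec{x}',\myvec{W}\leftarrow\myvec{w}^*}$; the paper does not do this and instead repeats the Proposition~\ref{prop:cause iff strat} argument essentially verbatim.

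The step you flag as the real obstacle --- that $F_\Gamma$ may drive the witness-agents to some $\myvec{w}'\neq\myvec{w}^*$, whereas AC2 in the modified HP definition requires $\myvec{W}$ to be held at its \emph{actual} value --- is a genuine gap, and you should know that the paper does not close it either: its proof asserts that $q_{n,m}$ corresponds to $\csettingint{\myvec{X}\leftarrow\myvec{x}',\myvec{W}\leftarrow\myvec{w}^*}$ with $\myvec{w}^*$ the actual value of $\myvec{W}$, which is only what Lemma~\ref{lem:causal strat leads to causal setting} delivers if the witness-agents happen to play $\myvec{w}^*$ under $F_\Gamma$. The gap is not cosmetic: with $\vphi=(A=1)\vee(B=1)$, $\myvec{X}=\set{A}$, $\myvec{W}=\set{B}$ and both variables actually $1$, the coalition $\set{A,B}$ can force $\neg\vphi$ by setting both to $0$, yet $A=1$ is not a cause with witness $B=1$, so the strategy-to-cause direction really does depend on restricting the witness-agents to $\myvec{w}^*$. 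Your AC3 remark (``the minimality already granted to $\myvec{X}$'') also begs the question in the backward direction, since nothing there grants minimality --- but the paper omits AC3 at exactly the same point. In short: same approach, and the one step you honestly leave open is precisely the step the published proof silently elides.
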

\begin{proof}
    We first prove the cause to strategy direction.
    In this case, $\myvec{X} = \myvec{x}$ is a cause of $\varphi$, with witness $\myvec{W} = \myvec{w}^*$ so there exists an alternative value for $\myvec{X}$, $\myvec{x}'$ such that $\csetting \models [\myvec{X} \leftarrow \myvec{x}', \myvec{W} \leftarrow \myvec{w}^*] \neg \varphi$.
    Let $F_\Gamma = \{F_{X_k = x} \ | \ k \in \Gamma, x \in \myvec{x}' \text{ if } X_k \in \myvec{X}, \text{ else } x \in \myvec{w}^*\}$. 
    By Lemma \ref{lem:causal strat leads to causal setting}, the leaf-state $q_{n,m}$ corresponds to $\csettingint{\myvec{X} \leftarrow \myvec{x}',\myvec{W}\leftarrow \myvec{w}^*}$, and we have that $\csettingint{\myvec{X} \leftarrow \myvec{x}', \myvec{W}\leftarrow\myvec{w}^*}\models\neg \vphi$ and hence $\neg \vphi$ holds in $q_{n,m}$.
    
    Now for the other direction. 
    Let $\myvec{x}, \myvec{w}^*$ be such that $\csetting \models \myvec{X} = \myvec{x} \wedge \myvec{W} = \myvec{w}^*$ and let $\myvec{x}'$ be such that $\myvec{X} = \myvec{x}'\subseteq \pi(q_{n,m})$. 
    By Lemma \ref{lem:causal strat leads to causal setting}, $q_{n,m}$ must correspond to $\csettingint{\myvec{X}\leftarrow\myvec{x}', \myvec{W} \leftarrow \myvec{w}^*}$. 
    Hence $\csetting \models [\myvec{X} \leftarrow \myvec{x}', \myvec{W} \leftarrow \myvec{w}^*]\neg \vphi$ and by definition we have that $\csetting \models \myvec{X} = \myvec{x} \wedge \vphi$. 
    Moreover, $\myvec{x} \neq \myvec{x}'$, because if they were the same it would not be the case that setting $\myvec{X}$ to $\myvec{x}'$ would give a different result than the original causal setting (the determinism axiom of the causal reasoning axioms \cite{halpern2016actual}). 
    Hence $\myvec{X} = \myvec{x}$ is a cause of $\vphi$ according to the modified HP definition, with witness $\myvec{W} = \myvec{w}^*$.
\end{proof}

As but-for causes have no witness, they give a stronger result.

\begin{corollary}\label{col:but-for cause iff strat}
    Let $\Gamma = \set{k \ | \ X_k \in \myvec{X}}$ be a set of agents, $\myvec{x}$ a setting for the variables in $\myvec{X}$, and let $\csetting$ be a causal setting with $\csetting\models \vphi$. 
    $\myvec{X} = \myvec{x}$ is a but-for cause of causal formula $\varphi$ in this causal setting $\csetting$ if and only if in the causal CGS generated by the causal setting, $\csetting$, $\Gamma$ has a strategy $F_\Gamma$ such that, $\neg \varphi$ will hold in the leaf-state $q_{n,m}$ resulting from the strategy profile $F_\Gamma \circ F_\mathcal{M}$
\end{corollary}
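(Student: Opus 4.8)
The plan is to obtain the corollary as an immediate specialisation of Proposition~\ref{prop:cause iff strat} to the case of an empty witness. Recall that, by the remark following Definition~\ref{def:HP}, saying that $\myvec{X} = \myvec{x}$ is a \emph{but-for cause} of $\varphi$ in $\csetting$ is by definition the same as saying that $\myvec{X} = \myvec{x}$ is a cause of $\varphi$ according to the modified HP definition with witness $\myvec{W} = \emptyset$ (and $\myvec{w}^*$ the empty tuple). So the first step is simply to rewrite the hypothesis ``but-for cause'' as ``cause with witness $\myvec{W} = \emptyset$''. The minimality clause AC3 is already folded into ``cause according to the modified HP definition'' in Proposition~\ref{prop:cause iff strat}, so it carries over to the notion of but-for cause without any extra work.

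The second step is the only bit of bookkeeping: intervening on the empty set of variables is vacuous, so $\mathcal{M}^{\emptyset} = \mathcal{M}$ and hence $\csettingint{\myvec{W}\leftarrow\myvec{w}^*} = \csetting$ when $\myvec{W} = \emptyset$. Consequently the causal CGS ``generated by $\csettingint{\myvec{W}\leftarrow\myvec{w}^*}$'' that appears in Proposition~\ref{prop:cause iff strat} is exactly the causal CGS generated by $\csetting$, i.e. the structure named in the corollary. This is precisely why the corollary is a ``stronger'' statement than Proposition~\ref{prop:cause iff strat}: no prior knowledge of a witness is required, because there is none.

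The third step is then to apply Proposition~\ref{prop:cause iff strat} verbatim with $\myvec{W} = \emptyset$. Its left-hand side becomes ``$\myvec{X} = \myvec{x}$ is a cause of $\varphi$ with witness $\emptyset$'', i.e. a but-for cause; its right-hand side becomes ``$\Gamma$ has a strategy $F_\Gamma$ such that $\neg\varphi$ holds in the leaf-state $q_{n,m}$ resulting from $F_\Gamma \circ F_\mathcal{M}$ in the causal CGS generated by $\csetting$'', which is exactly the right-hand side of the corollary. The biconditional of the corollary follows from the biconditional of the proposition. Since this is a pure specialisation there is no genuinely hard step; the closest thing to an obstacle is making sure the witness intervention really disappears (the second step) so that the ``for all agents except $\Gamma$ follow $F_\mathcal{M}$'' condition is interpreted in the CGS of $\csetting$ itself. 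For a self-contained alternative one could reprove it directly along the lines of the proof of Proposition~\ref{prop:cause iff strat}: in the forward direction take $\myvec{x}'$ with $\csetting \models [\myvec{X}\leftarrow\myvec{x}']\neg\varphi$, set $F_\Gamma = F_{\myvec{X} = \myvec{x}'}$, and use Lemma~\ref{lem:causal strat leads to causal setting} to see that $q_{n,m}$ corresponds to $\csettingint{\myvec{X}\leftarrow\myvec{x}'}$; in the converse direction read $\myvec{x}'$ off from $\pi(q_{n,m})$, apply Lemma~\ref{lem:causal strat leads to causal setting} again, and note $\myvec{x}\neq\myvec{x}'$ by the determinism axiom --- but invoking Proposition~\ref{prop:cause iff strat} is cleaner.
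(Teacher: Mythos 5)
Your proposal is correct and matches the paper's own proof: the paper likewise obtains the corollary as the special case $\myvec{W} = \emptyset$ of Proposition~\ref{prop:cause iff strat} (and~\ref{prop:cause iff superset strat}), and your extra bookkeeping observation that $\csettingint{\myvec{W}\leftarrow\myvec{w}^*} = \csetting$ when the witness is empty is exactly the point that makes the specialisation go through.
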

\begin{proof}
    A but-for cause is a special case of the modified HP definition where $\myvec{W} = \emptyset$. This statement is hence a special case of propositions \ref{prop:cause iff strat} and \ref{prop:cause iff superset strat}.
\end{proof}

\begin{example}
    In our running semi-automated vehicle example, both $ODS$ and $\neg DA$ are but-for causes of $\neg Col$, there being no collision (in the causal setting that there is an obstacle and the human driver is not paying attention). 
    In the case of $ODS$ we can define $F_{ODS}$ to be the strategy where the obstacle detection system will not pass on a signal to the driving assistant. 
    If all other agents follow the causal strategy profile, they will reach state $q_{2,5}$.
    Indeed $Col \in \pi(q_{2,5})$.
    Similarly, in the case of $\neg DA$, we can define $F_{DA}$ to be the strategy where the driving assistant does not brake. 
    When the other agents follow $F_{\mathcal{M}}$, they will end up in $q_{2,7}$.
    In that state it is indeed true that $Col \in \pi(q_{2,7})$.
\end{example}

In this section we have shown how agent strategies in a causal CGS relate to the causal relations in the causal setting the causal CGS was based on.
In order to do this, we have introduced the notion of a causal strategy profile, a strategy for the grand coalition that makes sure the agents do exactly those actions they would do if all relations in the causal model would be followed.

\section{Conclusion and Discussion}
\label{sec:conclusion and discussion}
This paper investigates the relation between two formalisms that can be used to model multi-agent systems: structural causal models as introduced by Pearl \cite{pearl1995causal} and concurrent game structures. This is done by proposing a systematic way to translate structural causal models to the co-called causal CGS.  
In such a causal CGS, agents will get to take their actions at a point corresponding to their position in the structural causal model. 
The causal CGS is defined in such a way that the leaf-states correspond to interventions on the original structural causal model.

In this paper we have used the variable levels as defined by Halpern to determine the position of the variables in the causal model \cite{halpern2016actual}. 
However, we can use any function that maps the endogenous variables to the positive integers as long as the function assigns a lower rank to a variable than to its descendants. In general, there are multiple of these functions possible for a given structural causal model. 
The formal results of this paper will hold for all such functions, though the structure of the resulting causal CGS may change due to the specific function used.

We can also relax the assumption that each agent controls exactly one agent variable. 
We assumed this to simplify the presentation of the causal CGS, but it is not a strict requirement. 
In principle an agent could control several variables and perform multiple actions, at several time steps, in the causal CGS.

A limitation of our approach is that in general, we are only able to give a result for actual causes if we already know the witness. For but-for causes, we are able to use the agents' abilities in the causal CGS to determine the but-for causes, 
but in general, this is not possible.
Another limitation is that the causal CGS is generated with respect to a specific causal setting, hence the results only apply to a single context. This means that if the context is uncertain, multiple causal CGS have to be made to evaluate all possible outcomes. However, it is possible that this problem can be solved by using a version of an epistemic CGS. This can be researched in the future.

So far, we have only looked at deterministic and recursive causal models to define the causal CGS.
However, causal relations are often probabilistic and cyclic in many practical use cases. Modelling such cases requires probabilistic and non-recursive causal models to, for example, capture the mutual dependencies between agents.
In order to deal with probabilities, we will have to either employ probabilistic CGS, or use another type of model (e.g. Markov games).
Moreover, allowing cyclic dependencies would make the evaluation of the states difficult, as the variable values would depend on each other.
We think that this could possibly be dealt with by adding a temporal component to the model, but this needs more research.

Another direction of future work would be to use this framework to compare different approaches to defining responsibility in multi-agent settings. Some existing works define responsibility based on causal relations between agents and an outcome (like \cite{Alechina_Halpern_Logan_2020,chockler2004responsibility,Friedenberg_Halpern_2019} and \cite{Beckers2023moral}), while other work is based on whether agents had a strategy to avoid the outcome (like \cite{baier2021game} and \cite{yazdanpanah2019strategic}). The definition of causal CGS might help to combine both directions of research.
Moreover, we can also look at how our approach compares to rule-based approaches to causality. Since Lorini's \cite{lorni2023rule} work shows a correspondence between his rule-based framework for causal reasoning and the structural equations framework, it seems possible that his framework can also be shown to have a connection to our causal CGS.

This research could be used in multi-agent systems with a clear causal structure. Examples of this are traffic control environments, like planes that cannot land when another is departing, trains that cannot travel over the same track at the same time, or traffic lights on a junction that cannot all turn to green at the same time. 
Other applications could be in the analysis of multi-player games, after all, players could cause other players to make a certain move, or even energy management systems, where supply and demand of electricity influence each other. 
In these situations this research could be used to help making decisions, or after something has gone wrong to help attributing responsibility for this.
\balance
%%%%%%%%%%%%%%%%%%%%%%%%%%%%%%%%%%%%%%%%%%%%%%%%%%%%%%%%%%%%%%%%%%%%%%%%

%%% The acknowledgments section is defined using the "acks" environment
%%% (rather than an unnumbered section). The use of this environment 
%%% ensures the proper identification of the section in the article 
%%% metadata as well as the consistent spelling of the heading.

\begin{acks}
This publication is part of the CAUSES project (KIVI.2019.004) of the research programme Responsible Use of Artificial Intelligence which is financed by the Dutch Research Council (NWO)
and ProRail.
\end{acks}

%%%%%%%%%%%%%%%%%%%%%%%%%%%%%%%%%%%%%%%%%%%%%%%%%%%%%%%%%%%%%%%%%%%%%%%%

%%% The next two lines define, first, the bibliography style to be 
%%% applied, and, second, the bibliography file to be used.

\bibliographystyle{ACM-Reference-Format} 
\bibliography{bibliography}

%%% -*-BibTeX-*-
%%% Do NOT edit. File created by BibTeX with style
%%% ACM-Reference-Format-Journals [18-Jan-2012].

\begin{thebibliography}{22}

%%% ====================================================================
%%% NOTE TO THE USER: you can override these defaults by providing
%%% customized versions of any of these macros before the \bibliography
%%% command.  Each of them MUST provide its own final punctuation,
%%% except for \shownote{}, \showDOI{}, and \showURL{}.  The latter two
%%% do not use final punctuation, in order to avoid confusing it with
%%% the Web address.
%%%
%%% To suppress output of a particular field, define its macro to expand
%%% to an empty string, or better, \unskip, like this:
%%%
%%% \newcommand{\showDOI}[1]{\unskip}   % LaTeX syntax
%%%
%%% \def \showDOI #1{\unskip}           % plain TeX syntax
%%%
%%% ====================================================================

\ifx \showCODEN    \undefined \def \showCODEN     #1{\unskip}     \fi
\ifx \showDOI      \undefined \def \showDOI       #1{#1}\fi
\ifx \showISBNx    \undefined \def \showISBNx     #1{\unskip}     \fi
\ifx \showISBNxiii \undefined \def \showISBNxiii  #1{\unskip}     \fi
\ifx \showISSN     \undefined \def \showISSN      #1{\unskip}     \fi
\ifx \showLCCN     \undefined \def \showLCCN      #1{\unskip}     \fi
\ifx \shownote     \undefined \def \shownote      #1{#1}          \fi
\ifx \showarticletitle \undefined \def \showarticletitle #1{#1}   \fi
\ifx \showURL      \undefined \def \showURL       {\relax}        \fi
% The following commands are used for tagged output and should be
% invisible to TeX
\providecommand\bibfield[2]{#2}
\providecommand\bibinfo[2]{#2}
\providecommand\natexlab[1]{#1}
\providecommand\showeprint[2][]{arXiv:#2}

\bibitem[\protect\citeauthoryear{Alechina, Halpern, and Logan}{Alechina et~al\mbox{.}}{2017}]%
        {Alechina_Halpern_Logan_2020}
\bibfield{author}{\bibinfo{person}{Natasha Alechina}, \bibinfo{person}{Joseph~Y Halpern}, {and} \bibinfo{person}{Brian Logan}.} \bibinfo{year}{2017}\natexlab{}.
\newblock \showarticletitle{Causality, Responsibility and Blame in Team Plans}. In \bibinfo{booktitle}{\emph{Proceedings of the 16th Conference on Autonomous Agents and MultiAgent Systems}}. \bibinfo{pages}{1091--1099}.
\newblock


\bibitem[\protect\citeauthoryear{Alur, Henzinger, and Kupferman}{Alur et~al\mbox{.}}{2002}]%
        {alur2002alternating}
\bibfield{author}{\bibinfo{person}{Rajeev Alur}, \bibinfo{person}{Thomas~A. Henzinger}, {and} \bibinfo{person}{Orna Kupferman}.} \bibinfo{year}{2002}\natexlab{}.
\newblock \showarticletitle{Alternating-time temporal logic}.
\newblock \bibinfo{journal}{\emph{J. ACM}} \bibinfo{volume}{49}, \bibinfo{number}{5} (\bibinfo{date}{Sept.} \bibinfo{year}{2002}), \bibinfo{pages}{672–713}.
\newblock
\showISSN{0004-5411, 1557-735X}
\urldef\tempurl%
\url{https://doi.org/10.1145/585265.585270}
\showDOI{\tempurl}


\bibitem[\protect\citeauthoryear{Baier, Funke, and Majumdar}{Baier et~al\mbox{.}}{2021a}]%
        {baier2021game-theoretic}
\bibfield{author}{\bibinfo{person}{Christel Baier}, \bibinfo{person}{Florian Funke}, {and} \bibinfo{person}{Rupak Majumdar}.} \bibinfo{year}{2021}\natexlab{a}.
\newblock \showarticletitle{A Game-Theoretic Account of Responsibility Allocation}. In \bibinfo{booktitle}{\emph{Proceedings of the Thirtieth International Joint Conference on Artificial Intelligence, {IJCAI-21}}}, \bibfield{editor}{\bibinfo{person}{Zhi-Hua Zhou}} (Ed.). \bibinfo{publisher}{International Joint Conferences on Artificial Intelligence Organization}, \bibinfo{pages}{1773--1779}.
\newblock
\urldef\tempurl%
\url{https://doi.org/10.24963/ijcai.2021/244}
\showDOI{\tempurl}
\newblock
\shownote{Main Track.}


\bibitem[\protect\citeauthoryear{Baier, Funke, and Majumdar}{Baier et~al\mbox{.}}{2021b}]%
        {baier2021game}
\bibfield{author}{\bibinfo{person}{Christel Baier}, \bibinfo{person}{Florian Funke}, {and} \bibinfo{person}{Rupak Majumdar}.} \bibinfo{year}{2021}\natexlab{b}.
\newblock \bibinfo{booktitle}{\emph{A Game-Theoretic Account of Responsibility Allocation}}.
\newblock \bibinfo{type}{{T}echnical {R}eport} arXiv:2105.09129.
\newblock


\bibitem[\protect\citeauthoryear{Baier and Katoen}{Baier and Katoen}{2008}]%
        {baier2008principles}
\bibfield{author}{\bibinfo{person}{Christel Baier} {and} \bibinfo{person}{Joost-Pieter Katoen}.} \bibinfo{year}{2008}\natexlab{}.
\newblock \bibinfo{booktitle}{\emph{Principles of model checking}}.
\newblock \bibinfo{publisher}{MIT press}.
\newblock


\bibitem[\protect\citeauthoryear{Beckers}{Beckers}{2023}]%
        {Beckers2023moral}
\bibfield{author}{\bibinfo{person}{Sander Beckers}.} \bibinfo{year}{2023}\natexlab{}.
\newblock \showarticletitle{Moral Responsibility for AI Systems}. In \bibinfo{booktitle}{\emph{Advances in Neural Information Processing Systems}}, \bibfield{editor}{\bibinfo{person}{A.~Oh}, \bibinfo{person}{T.~Neumann}, \bibinfo{person}{A.~Globerson}, \bibinfo{person}{K.~Saenko}, \bibinfo{person}{M.~Hardt}, {and} \bibinfo{person}{S.~Levine}} (Eds.), Vol.~\bibinfo{volume}{36}. \bibinfo{publisher}{Curran Associates, Inc.}, \bibinfo{pages}{4295--4308}.
\newblock


\bibitem[\protect\citeauthoryear{Beckers and Vennekens}{Beckers and Vennekens}{2018}]%
        {Beckers_Vennekens_2018}
\bibfield{author}{\bibinfo{person}{Sander Beckers} {and} \bibinfo{person}{Joost Vennekens}.} \bibinfo{year}{2018}\natexlab{}.
\newblock \showarticletitle{A principled approach to defining actual causation}.
\newblock \bibinfo{journal}{\emph{Synthese}} \bibinfo{volume}{195}, \bibinfo{number}{2} (\bibinfo{date}{Feb.} \bibinfo{year}{2018}), \bibinfo{pages}{835–862}.
\newblock
\showISSN{0039-7857, 1573-0964}
\urldef\tempurl%
\url{https://doi.org/10.1007/s11229-016-1247-1}
\showDOI{\tempurl}


\bibitem[\protect\citeauthoryear{Bochman}{Bochman}{2018}]%
        {bochman2018actual}
\bibfield{author}{\bibinfo{person}{Alexander Bochman}.} \bibinfo{year}{2018}\natexlab{}.
\newblock \showarticletitle{Actual Causality in a Logical Setting}. In \bibinfo{booktitle}{\emph{Proceedings of the Twenty-Seventh International Joint Conference on Artificial Intelligence, {IJCAI-18}}}. \bibinfo{publisher}{International Joint Conferences on Artificial Intelligence Organization}, \bibinfo{pages}{1730--1736}.
\newblock
\urldef\tempurl%
\url{https://doi.org/10.24963/ijcai.2018/239}
\showDOI{\tempurl}


\bibitem[\protect\citeauthoryear{Brihaye, Da~Costa, Laroussinie, and Markey}{Brihaye et~al\mbox{.}}{2009}]%
        {Brihaye_DaCosta_Laroussinie_Markey_2008}
\bibfield{author}{\bibinfo{person}{Thomas Brihaye}, \bibinfo{person}{Arnaud Da~Costa}, \bibinfo{person}{François Laroussinie}, {and} \bibinfo{person}{Nicolas Markey}.} \bibinfo{year}{2009}\natexlab{}.
\newblock \showarticletitle{ATL with Strategy Contexts and Bounded Memory}. In \bibinfo{booktitle}{\emph{Logical Foundations of Computer Science}} \emph{(\bibinfo{series}{Lecture Notes in Computer Science}, Vol.~\bibinfo{volume}{5407})}, \bibfield{editor}{\bibinfo{person}{Sergei Artemov} {and} \bibinfo{person}{Anil Nerode}} (Eds.). \bibinfo{publisher}{Springer Berlin Heidelberg}, \bibinfo{address}{Berlin, Heidelberg}, \bibinfo{pages}{92–106}.
\newblock
\showISBNx{978-3-540-92687-0}


\bibitem[\protect\citeauthoryear{Chockler and Halpern}{Chockler and Halpern}{2004}]%
        {chockler2004responsibility}
\bibfield{author}{\bibinfo{person}{Hana Chockler} {and} \bibinfo{person}{Joseph~Y Halpern}.} \bibinfo{year}{2004}\natexlab{}.
\newblock \showarticletitle{Responsibility and blame: A structural-model approach}.
\newblock \bibinfo{journal}{\emph{Journal of Artificial Intelligence Research}}  \bibinfo{volume}{22} (\bibinfo{year}{2004}), \bibinfo{pages}{93--115}.
\newblock


\bibitem[\protect\citeauthoryear{Friedenberg and Halpern}{Friedenberg and Halpern}{2019}]%
        {Friedenberg_Halpern_2019}
\bibfield{author}{\bibinfo{person}{Meir Friedenberg} {and} \bibinfo{person}{Joseph~Y. Halpern}.} \bibinfo{year}{2019}\natexlab{}.
\newblock \showarticletitle{Blameworthiness in Multi-Agent Settings}.
\newblock \bibinfo{journal}{\emph{Proceedings of the AAAI Conference on Artificial Intelligence}} \bibinfo{volume}{33}, \bibinfo{number}{01} (\bibinfo{date}{July} \bibinfo{year}{2019}), \bibinfo{pages}{525–532}.
\newblock
\urldef\tempurl%
\url{https://doi.org/10.1609/aaai.v33i01.3301525}
\showDOI{\tempurl}


\bibitem[\protect\citeauthoryear{Gladyshev, Alechina, Dastani, and Doder}{Gladyshev et~al\mbox{.}}{2023a}]%
        {gladyshev2023dynamics}
\bibfield{author}{\bibinfo{person}{Maksim Gladyshev}, \bibinfo{person}{Natasha Alechina}, \bibinfo{person}{Mehdi Dastani}, {and} \bibinfo{person}{Dragan Doder}.} \bibinfo{year}{2023}\natexlab{a}.
\newblock \showarticletitle{Dynamics of Causal Dependencies in Multi-agent Settings}. In \bibinfo{booktitle}{\emph{Engineering Multi-Agent Systems}}, \bibfield{editor}{\bibinfo{person}{Andrei Ciortea}, \bibinfo{person}{Mehdi Dastani}, {and} \bibinfo{person}{Jieting Luo}} (Eds.). \bibinfo{publisher}{Springer Nature Switzerland}, \bibinfo{address}{Cham}, \bibinfo{pages}{95--112}.
\newblock
\showISBNx{978-3-031-48539-8}


\bibitem[\protect\citeauthoryear{Gladyshev, Alechina, Dastani, Doder, and Logan}{Gladyshev et~al\mbox{.}}{2023b}]%
        {gladyshev2023dynamic}
\bibfield{author}{\bibinfo{person}{Maksim Gladyshev}, \bibinfo{person}{Natasha Alechina}, \bibinfo{person}{Mehdi Dastani}, \bibinfo{person}{Dragan Doder}, {and} \bibinfo{person}{Brian Logan}.} \bibinfo{year}{2023}\natexlab{b}.
\newblock \showarticletitle{Dynamic Causality}. In \bibinfo{booktitle}{\emph{ECAI 2023-26th European Conference on Artificial Intelligence, including 12th Conference on Prestigious Applications of Intelligent Systems, PAIS 2023-Proceedings}}. IOS Press BV.
\newblock


\bibitem[\protect\citeauthoryear{Gorrieri}{Gorrieri}{2017}]%
        {gorrieri2017process}
\bibfield{author}{\bibinfo{person}{Roberto Gorrieri}.} \bibinfo{year}{2017}\natexlab{}.
\newblock \bibinfo{booktitle}{\emph{Process algebras for Petri nets: the alphabetization of distributed systems}}.
\newblock \bibinfo{publisher}{Springer}.
\newblock


\bibitem[\protect\citeauthoryear{Hall}{Hall}{2007}]%
        {Hall_2007}
\bibfield{author}{\bibinfo{person}{N. Hall}.} \bibinfo{year}{2007}\natexlab{}.
\newblock \showarticletitle{Structural equations and causation}.
\newblock \bibinfo{journal}{\emph{Philosophical Studies}} \bibinfo{volume}{132}, \bibinfo{number}{1} (\bibinfo{date}{Jan.} \bibinfo{year}{2007}), \bibinfo{pages}{109–136}.
\newblock
\showISSN{0031-8116, 1573-0883}
\urldef\tempurl%
\url{https://doi.org/10.1007/s11098-006-9057-9}
\showDOI{\tempurl}


\bibitem[\protect\citeauthoryear{Halpern}{Halpern}{2016}]%
        {halpern2016actual}
\bibfield{author}{\bibinfo{person}{Joseph~Y Halpern}.} \bibinfo{year}{2016}\natexlab{}.
\newblock \bibinfo{booktitle}{\emph{Actual causality}}.
\newblock \bibinfo{publisher}{MIT Press}.
\newblock


\bibitem[\protect\citeauthoryear{Halpern and Pearl}{Halpern and Pearl}{2005}]%
        {halpern2005causes}
\bibfield{author}{\bibinfo{person}{Joseph~Y. Halpern} {and} \bibinfo{person}{Judea Pearl}.} \bibinfo{year}{2005}\natexlab{}.
\newblock \showarticletitle{Causes and Explanations: A Structural-Model Approach. Part I: Causes}.
\newblock \bibinfo{journal}{\emph{The British Journal for the Philosophy of Science}} \bibinfo{volume}{56}, \bibinfo{number}{4} (\bibinfo{year}{2005}), \bibinfo{pages}{843--887}.
\newblock
\urldef\tempurl%
\url{https://doi.org/10.1093/bjps/axi147}
\showDOI{\tempurl}


\bibitem[\protect\citeauthoryear{Hammond, Fox, Everitt, Carey, Abate, and Wooldridge}{Hammond et~al\mbox{.}}{2023}]%
        {hammond2023ReasoningCausalityGames}
\bibfield{author}{\bibinfo{person}{Lewis Hammond}, \bibinfo{person}{James Fox}, \bibinfo{person}{Tom Everitt}, \bibinfo{person}{Ryan Carey}, \bibinfo{person}{Alessandro Abate}, {and} \bibinfo{person}{Michael Wooldridge}.} \bibinfo{year}{2023}\natexlab{}.
\newblock \showarticletitle{Reasoning about Causality in Games}.
\newblock \bibinfo{journal}{\emph{Artificial Intelligence}}  \bibinfo{volume}{320} (\bibinfo{date}{July} \bibinfo{year}{2023}), \bibinfo{pages}{103919}.
\newblock
\showISSN{0004-3702}
\urldef\tempurl%
\url{https://doi.org/10.1016/j.artint.2023.103919}
\showDOI{\tempurl}


\bibitem[\protect\citeauthoryear{Lorini}{Lorini}{2023}]%
        {lorni2023rule}
\bibfield{author}{\bibinfo{person}{Emiliano Lorini}.} \bibinfo{year}{2023}\natexlab{}.
\newblock \showarticletitle{A Rule-Based Modal View of Causal Reasoning}. In \bibinfo{booktitle}{\emph{Proceedings of the Thirty-Second International Joint Conference on Artificial Intelligence, {IJCAI-23}}}, \bibfield{editor}{\bibinfo{person}{Edith Elkind}} (Ed.). \bibinfo{publisher}{International Joint Conferences on Artificial Intelligence Organization}, \bibinfo{pages}{3286--3295}.
\newblock
\urldef\tempurl%
\url{https://doi.org/10.24963/ijcai.2023/366}
\showDOI{\tempurl}
\newblock
\shownote{Main Track.}


\bibitem[\protect\citeauthoryear{Pearl}{Pearl}{1995}]%
        {pearl1995causal}
\bibfield{author}{\bibinfo{person}{Judea Pearl}.} \bibinfo{year}{1995}\natexlab{}.
\newblock \showarticletitle{Causal diagrams for empirical research}.
\newblock \bibinfo{journal}{\emph{Biometrika}} \bibinfo{volume}{82}, \bibinfo{number}{4} (\bibinfo{year}{1995}), \bibinfo{pages}{669--688}.
\newblock


\bibitem[\protect\citeauthoryear{Pearl, Glymour, and Jewell}{Pearl et~al\mbox{.}}{2016}]%
        {Pearl2016causal}
\bibfield{author}{\bibinfo{person}{Judea Pearl}, \bibinfo{person}{Madelyn Glymour}, {and} \bibinfo{person}{Nicholas~P Jewell}.} \bibinfo{year}{2016}\natexlab{}.
\newblock \bibinfo{booktitle}{\emph{Causal inference in statistics: A primer}}.
\newblock \bibinfo{publisher}{John Wiley \& Sons}.
\newblock


\bibitem[\protect\citeauthoryear{Yazdanpanah, Dastani, Alechina, Logan, and Jamroga}{Yazdanpanah et~al\mbox{.}}{2019}]%
        {yazdanpanah2019strategic}
\bibfield{author}{\bibinfo{person}{Vahid Yazdanpanah}, \bibinfo{person}{Mehdi Dastani}, \bibinfo{person}{Natasha Alechina}, \bibinfo{person}{Brian Logan}, {and} \bibinfo{person}{Wojciech Jamroga}.} \bibinfo{year}{2019}\natexlab{}.
\newblock \showarticletitle{Strategic responsibility under imperfect information}. In \bibinfo{booktitle}{\emph{Proceedings of the 18th International Conference on Autonomous Agents and Multiagent Systems AAMAS 2019}}. IFAAMAS, \bibinfo{pages}{592--600}.
\newblock


\end{thebibliography}

%%%%%%%%%%%%%%%%%%%%%%%%%%%%%%%%%%%%%%%%%%%%%%%%%%%%%%%%%%%%%%%%%%%%%%%%

\end{document}